\documentclass{article}
\usepackage[utf8]{inputenc}
\usepackage{authblk}
\usepackage{setspace}
\usepackage[margin=1.25in]{geometry}
\usepackage{graphicx}
\graphicspath{ {./figures/} }
\usepackage{subcaption}
\usepackage{amsmath}
\usepackage{amssymb}
\usepackage{mathtools}
\usepackage{amsthm}
\usepackage{lineno}
\usepackage{makecell} 

\theoremstyle{plain}
\newtheorem{theorem}{Theorem}[section]

\theoremstyle{definition}

\theoremstyle{remark}
\newtheorem{remark}{Remark}
 
\title{Identifiable Representation and Model Learning for Latent Dynamic Systems}

\author[1]{Congxi Zhang}
\author[1*]{Yongchun Xie} 

\affil[1]{Beijing Institute of Control Engineering, Beijing, P.~R.~China.} 
\affil[*]{Address correspondence to: xieyongchun@vip.sina.com} 

\date{}

\onehalfspacing

\begin{document}

\maketitle

\begin{abstract}
Learning identifiable representations and models from low-level observations is helpful for an intelligent spacecraft {to complete downstream tasks reliably}. For temporal observations, to ensure that the data generating process is provably inverted, most existing works either assume the noise variables in the dynamic mechanisms are (conditionally) independent or require that the interventions can directly affect each latent variable. However, in practice, the relationship between the exogenous inputs/interventions and the latent variables may follow some complex deterministic mechanisms. In this work, we study the problem of identifiable representation and model learning for latent dynamic systems. The key idea is to use an inductive bias inspired by controllable canonical forms, {which are sparse and input-dependent} by definition. We prove that, for linear and affine nonlinear latent dynamic systems  {with sparse input matrices}, it is possible to identify the latent variables up to scaling and determine the dynamic models up to some simple transformations. The results have the potential to provide some theoretical guarantees for developing more trustworthy decision-making and control methods for intelligent spacecrafts.
\end{abstract}


\section{Introduction}

	When executing tasks in complex environments, an intelligent spacecraft \cite{wu2022centroidal,yuanli,sun2023satellite} may encounter low-level observations such as images or outputs of pre-trained neural networks \cite{lilinfeng, jiang2022progress}. These observations generally do not have any physical meaning, and their relationship is usually highly nonlinear and unknown. But there may exist some high-level latent variables (or latents for short), which can represent these observations, and some invariant mechanisms that can describe the relationship between these latents and exogenous inputs \cite{bengio2013representation,scholkopf2021toward}. Then, these systems can be regarded as latent dynamic systems \cite{watter2015embed,gelada2019deepmdp,hafner2019learning,hafner2019dream}.

  Recovering these latents and estimating the corresponding dynamic models are crucial for an intelligent spacecraft to make trustworthy decisions and finish downstream tasks in rich observation environments \cite{bengio2013representation,tomar2023learning,wang2022research}. Generally, we expect
 the learned representations to be identifiable, i.e., it is identical to the latents which reflects the actual data generating process up to some simple transformations  \cite{locatello2019challenging, khemakhem2020variational}. Since learning identifiable representations purely from the observational data without suitable inductive bias or extra information is impossible  \cite{locatello2019challenging}, some works use side information to help identify the latents under the assumption that they are (conditionally) independent  \cite{khemakhem2020variational, kivva2022identifiability}. However, in practice, variables usually follow some statistic or dynamic mechanisms, which makes them causally related \cite{scholkopf2021toward, trauble2021disentangled}. In this case, both the latent variables and the mechanisms have to be recovered \cite{scholkopf2021toward, yang2021causalvae, shen2022weakly, brehmer2022weakly, ahuja2022weakly}.
 
 This work mainly focuses on representation learning of latents under dynamic mechanisms. Observations in dynamic processes are usually more complex, but they also provide us with more information \cite{yao2021learning, lippe2022citris, lachapelle2022disentanglement}. {To identify the latents in dynamic processes, most works are based on temporal independent component analysis (ICA), and the noises in the dynamic mechanisms are regarded as the independent components \cite{yao2021learning, yao2022temporally, lachapelle2022disentanglement, liu2023learning}.} This actually means that the agents should go through sufficiently diverse mechanisms in each state to fit distributions of noises which exactly correspond  to each mechanism. Requiring sufficiently diverse noises/mechanisms makes these methods useful for causal discovery where observations generated by different dynamic models are witnessed. However, this is not for learning a specific dynamic model, especially when the model contains deterministic mechanisms. While deterministic mechanisms do not always matter for representation learning of static data \cite{liang2023causal, von2023nonparametric}, since we can simply eliminate corresponding factors of variation, it is not the case for dynamic processes because deterministic mechanisms also determine the time evolution of a system \cite{scholkopf2021toward}. Actually, even if the deterministic mechanisms in a dynamic process are known in advance, we have little hope of getting very strong identifiable results without further assumptions or other kinds of supervision \cite{ahuja2021properties}. 
 
 One possible way to overcome this problem is to use the mechanism-based perspective developed in \cite{ahuja2021properties, ahuja2022weakly}. Ahuja et al.\cite{ahuja2021properties} shows that if the true mechanism is known, then the latents can be identified up to any
 equivariances of the mechanism. Based on this framework, Ahuja et al.\cite{ahuja2022weakly} proved that if one can perturb each latent variable, the latents can be identified up to permutation and scaling. The assumption that the interventions/inputs directly affect each latent variable also means that the latent dynamic system behaves like (nonlinear) first-order discrete integrators. However, in practice, the relationship between the inputs and the latent variables may be more complex
 than that, and one input may affect several latent variables as time goes on.  
 
 {To reliably recover the controllable latents and their dynamic models from observations, in this paper, we study the identifiable representation and model learning problem and extend the results in \cite{ahuja2022weakly} to (high-order) latent dynamic systems.} Since the representation of such a system itself is not unique, we consider using an inductive bias inspired by control canonical forms. The controllable canonical form is a standard representation method for dynamic systems, and it represents variables related to each input in a chain form  \cite{rugh1996linear,luenberger1979dynamic}. This representation is sparse by definition and reveals the relationship between the inputs and the latent variables. We summarize the key contributions of this paper as follows.
 \begin{itemize}
 	\item We show that it is possible to jointly identify the latents and the corresponding (high-order) dynamic models for latent dynamic systems  {with sparse input matrices}. To the best of our knowledge, this is the first identifiable representation learning result for latents under deterministic high-order dynamic mechanisms.
 	\item For the learned representations, while most works identify the latents under nonlinear mechanisms up to nonlinear component-wise invertible transformations, our results identify the latents up to scaling even when the dynamic system is affine nonlinear.
 	\item For the learned dynamic models, our results show that for single-input linear systems, the coefficients in the system matrix are identified to the truth value. For multi-input linear systems and affine nonlinear systems, the coefficients or coefficient functions are identified up to some trivial transformations that will not affect the one-step prediction.
 \end{itemize}
 
 \section{Related Work}  
 
 \subsection{Representation Learning in Reinforcement Learning} 
 In reinforcement learning, representation learning is usually used to abstract low-dimensional representations from high-dimensional observations to promote the sample efficiency and generalization ability \cite{tomar2023learning,liu2023learning,hafner2019learning}. Approaches include reconstruction based methods  \cite{jaderberg2016reinforcement,gelada2019deepmdp,hafner2019learning,hafner2019dream,lei2022causal,wu2023daydreamer}, contrastive learning  \cite{laskin2020curl}, self-predictive representations  \cite{schwarzer2020data}, metric-based methods  \cite{zhang2020learning}, and so on. However, few works consider the identifiability of the learned representations and models. Hence, it is unclear whether the true data generating process is reliably recovered.

 \subsection{Identifiable Representation Learning for Dynamic Processes}
 \label{sec Identifiable Representation Learning for Dynamic Processes} 
 For dynamic processes, identifiable representation learning methods try to invert
 the data generating process from time series data.
 
  {Most works utilize independent components to identify the latents \cite{hyvarinen2016unsupervised, hyvarinen2017nonlinear, halva2020hidden, hyvarinen2023nonlinear}. The noise variables are regarded as the independent components and are assumed to be (conditional) independent.} SlowVAE \cite{klindt2020towards} assume the noise variables follow a factorized generalized Laplace distribution. LEAP \cite{yao2021learning} leverage nonstationarity noises to identify representations and models for causal latent processes. CITRIS \cite{lippe2022citris} assumes the noise variables are mutually independent and use interventions to identify multidimensional causal factors up to their minimal causal variables. TDRL \cite{yao2022temporally} assumes that the noise variables are mutually independent and identify the latents up to an invertible, component-wise transformation. iCITRIS \cite{lippe2022icitris} takes both instantaneous and temporal effects into consideration. Lachapelle et al.\cite{lachapelle2024nonparametric} introduce a novel principle called mechanism sparsity regularization and recover the latents by regularizing the learned causal graph
 to be sparse. {However, relying on noise makes these methods unsuitable for deterministic mechanisms. Besides, for the dynamic systems, it is unclear whether the learned mechanisms determine the system's evolution or the generation process of the measurement noise.}
 
 {Another line of work mainly leverages contrastive pairs of observations to identify the latents.} Locatello et al.\cite{locatello2020weakly} shows that identifiability is possible if the latent variables are independent and pairs of observations are accessible. Brehmer et al.\cite{brehmer2022weakly} allows the latent variables to be causally related and use counterfactual data to identify both the representations and the causal graphs. Ahuja et al.\cite{ahuja2022weakly} leverages sparse perturbations to identify the latent variables that follow any unknown continuous distribution. Recently, some works relax the assumption that paired observations are attainable and instead use interventional data \cite{ahuja2023interventional, squires2023linear, von2023nonparametric, zhang2023identifiability, varici2023general}. {However, they generally also assume that the interventions have direct effects on each latent variable, while our results allow the latent variables and the exogenous inputs to follow high-order mechanisms.}
 
 {We summarize some attributes of different methods in the table below.
 
 \begin{table}[h]
 	\caption{Comparisons of our results to prior works on identifiable representation learning for dynamic processes. A check indicates the methods can be applied to this setting.}
 	\label{table 0}
 	\centering
 	\begin{tabular}{ccccc}
 		\hline
 		\makecell{Method} & \makecell{Deterministic \\ Mechanisms} & \makecell{Exogenous \\ Inputs/Interventions} & \makecell{High-order\\ Mechanisms} & Identifiability \\
 		\hline
 		\cite{yao2021learning,yao2022temporally} & $\times$ &$\times$ &$\checkmark$ & componentwise \\ 
 		\cite{lippe2022citris,lippe2022icitris} & $\times$ &$\checkmark$ &$\checkmark$ & componentwise/blockwise \\ 
 		\cite{lachapelle2024nonparametric} & $\times$ &$\checkmark$ &$\checkmark$ & $a$-consistency/$z$-consistency \\  
 		\cite{ahuja2022weakly} & $\checkmark$ &$\checkmark$ &$\times$ & permutation and scaling \\
 		\textbf{Ours} & $\checkmark$ &$\checkmark$ &$\checkmark$ & scaling\\ 
 		\hline
 	\end{tabular}  
 \end{table}
}
  \subsection{Controllable Canonical Forms}
  {Controllable canonical form is a standard representation form of dynamic systems. It represents variables related to the input in a chain form that { is sparse and input-dependent by definition}. However, in control theory, we directly consider high-level variables instead of their low-level observations, and establishing controllable canonical forms requires that the mechanisms are perfectly known. We will formulize this form in the next section, and more information about controllable canonical forms can be found in, e.g.,  \cite{rugh1996linear, luenberger1967canonical, luenberger1979dynamic}.}
  
\section{Materials and Methods}
  \subsection{Representation and Model Learning for Single-Input Latent Dynamic Systems}
 \label{Section Single-Input Latent Dynamic Systems}
 In this section, we provide the representation and model learning methods and identifiability results for linear single-input systems and affine nonlinear single-input systems. We discuss how to extend these results to corresponding multi-input cases in the next section.
 
 \subsubsection{Linear Cases}
 \paragraph{Latent Dynamic Systems}
 We consider a general $n$-order single-input linear dynamic system
 \begin{equation}
 	q^{t+n}-a_{n}q^{t+n-1}-\cdots-a_{1}q^{t} = b u^{t},
 	\label{linear system with single-input plain}
 \end{equation}
 where $q^{t+n}, \cdots, q^{t} \in \mathbb{R}$ are state variables. $a_1, a_2, ..., a_n, b \in \mathbb{R}$ are constant coefficients. $u^{t} \in (-U,U) \subseteq \mathbb{R}$ is the input at time $t$, which can be set to any value in the admissible control set $(-U,U)$.

 If we let $\boldsymbol{z}^{t}=[q^{t}, \cdots, q^{t+n-1}]^T$, then we can write system (\ref{linear system with single-input plain}) in its controllable canonical form as  
 \begin{equation}
 	\boldsymbol{z}^{t+1}=\boldsymbol{A} \boldsymbol{z}^{t} + \boldsymbol{B} u^{t},
 	\label{linear system with single-input}
 \end{equation}
 where $\boldsymbol{z}^{t} \in  \mathbb{R}^n$ denotes the latent variables at time $t$, and the distribution of $\boldsymbol{z}^{t}$ is unknown. Throughout this work, we assume that the order of the dynamic system (i.e., $n$) is known or predetermined. Otherwise we search for the order during training.  Coefficient matrices $\boldsymbol{A}\in \mathbb{R}^{n \times n}$ and $\boldsymbol{B}\in\mathbb{R}^{n}$ in (\ref{linear system with single-input}) have following forms 
 \begin{equation}
 	\boldsymbol{A}=\begin{bmatrix}0&1&0&\cdots&0\\
 		0&0&1&\cdots&0\\
 		\vdots&\vdots&\vdots&\ddots&\vdots\\
 		0&0&0&\cdots&1\\
 		a_1&a_2&a_3&...&a_n\end{bmatrix}, \boldsymbol{B}=\begin{bmatrix}0\\0\\ \vdots \\0\\b\end{bmatrix}.
 	\label{linear A B}
 \end{equation}
 It is easy to see that determining the coefficient matrices $\boldsymbol{A}$ and $\boldsymbol{B}$ is equivalent to determining all the coefficients in system (\ref{linear system with single-input plain}). Hence, it is equivalent to identifying the dynamic model. 
 
 The main reason why we directly assume the system is represented in controllable canonical forms described above is that our objective here is not causal discovery but representation and dynamic model learning \cite{scholkopf2021toward}. That is, we are not trying to discover the invariant part from a set of mechanisms. Instead, we want to learn the representation and the corresponding dynamic model for one specific system/mechanism in a sparse and input-dependent form.
 
 Another advantage of using controllable canonical forms is that they can capture the indirect effect of the exogenous input. Most existing works use (conditional) independence/dependence to determine whether a latent variable is controllable \cite{wang2022denoised, huang2022action, liu2023learning}, which works for first-order systems but may not for high-order systems. 
 
 { A typical example is the mechanical system, which is also mentioned in \cite{liu2023learning}. If we let $d_\phi$, $v_\phi$, $m_\phi$, and $F$ to denote the position, velocity, mass,
 and exogenous force, respectively, then Newton's Second Law tells us:
 \begin{equation}
 	\left\{
 	\begin{array}{l}
 		d_\phi^{t+1}=d_\phi^{t}+v_\phi^{t} \notag \\ 
 		v_\phi^{t+1}=v_\phi^{t}+\frac{1}{m_\phi}\cdot F^{t}\notag.
 	\end{array}
 	\right.
 	\label {f=ma}
 \end{equation}  
 Force $F^t$ has a direct effect on the next-step velocity $v_\phi^{t+1}$, but not on the next-step position $d_\phi^{t+1}$. If we condition on the current position $d_\phi^{t}$ and the current velocity $v_\phi^{t}$, then the next-step position $d_\phi^{t+1}$ is invariant of the force $F^{t}$, which means the position is uncontrollable. However, obviously, the force can change the position, i.e., the position is actually controllable, which creates a contradiction.
 
 In controllable canonical forms, the mechanical system is represented as
 \begin{equation}
 	\boldsymbol{z}^{t+1}=\begin{bmatrix}
 		0& 1\\-1& 2
 	\end{bmatrix} \boldsymbol{z}^{t} + \begin{bmatrix}
 		0\\1
 	\end{bmatrix} F^{t}, 
 \end{equation}
 where $\boldsymbol{z}^t=[m_\phi d_\phi^t,m_\phi d_\phi^t+m_\phi v_\phi^t]^T$. It is easy to see that both $d_\phi^t$ and $v_\phi^t$ are controllable (All components of $z^t$ are regarded as controllable, instead of using an independence test.) and can be inferred from $z^t$ through a simple linear transformation.}
 
 We then consider the observations. Since the latent variable $\boldsymbol{z}^{t}$ is not observable, we have to recover it from observations. In control theory, we usually have a high-level observational variable $\boldsymbol{h}^t\in \mathbb{R}^{n_h}$, which is related to the latent variable $\boldsymbol{z}^{t}$ by a known mixing matrix $\boldsymbol{H}\in \mathbb{R}^{n_h \times n}$, i.e., $\boldsymbol{h}^t=\boldsymbol{H}\boldsymbol{z}^t$ (see, e.g., \cite{rugh1996linear, rajendran2023interventional}). But for latent dynamic systems, we consider low-level observational data $\boldsymbol{x}^{t} \in \mathcal{X} \subset \mathbb{R}^m$, $m\geq n$, and it is related to $\boldsymbol{z}^{t}$ as
 \begin{equation}
 	\boldsymbol{x}^{t}=g(\boldsymbol{z}^{t}),
 	\label{g}
 \end{equation}
 where $g$: $\mathbb{R}^n \to \mathcal{X}$ is unknown and assumed to be an injective and differentiable function. 
 
 Notice that we assume $g(\cdot)$ is an injective function of $\boldsymbol{z}^t$, instead of $\boldsymbol{q}^t$. This means that current observation contains all the information that determines how the system evolves at this time, which is similar to the notion of state measurement in control theory. A more general case is that $\boldsymbol{x}^t$ is an injective function of $\boldsymbol{h}^t$, which is related to $\boldsymbol{z}^t$ as $\boldsymbol{h}^t=\boldsymbol{H}\boldsymbol{z}^t$. Under this condition, we can use observations in a sliding time window to construct a new observation that is an injective function of $\boldsymbol{z}^t$, which is a standard procedure. 
 
 Our objective is to invert the nonlinear mixing function $g$ and determine the dynamic model (\ref{linear system with single-input}).

 \paragraph{Representation and Model Learning}
 
 We then discuss how to use learning methods to invert $g$ and determine the dynamic model (\ref{linear system with single-input}). To recover the latent variable, we train an encoder $f$: $\mathcal{X}  \to \mathbb{R}^n$ to invert $g$, i.e.,
 \begin{equation}
 	\boldsymbol{\hat z}^{t}=f(\boldsymbol{x}^{t}),
 	\label{f}
 \end{equation}
 where $\boldsymbol{\hat z}^{t} \in  \mathbb{R}^n$ is the estimation of $\boldsymbol{z}^{t}$, and we assume that $f$ is a differentiable function. Here we define $\tau\doteq f \circ g$, which describes
 the relationship between the real $\boldsymbol{z}^t$ and the estimated one $\boldsymbol{\hat z}^t$, i.e., $\boldsymbol{\hat z}^t = f \circ g(\boldsymbol{z}^{t}) = \tau(\boldsymbol{z}^{t})$.
 
 To estimate the dynamic system, we let $\boldsymbol{\hat A}$ and $\boldsymbol{\hat B}$ be the estimation of $\boldsymbol{A}$ and $\boldsymbol{B}$ respectively, and they have following forms
 \begin{equation}
 	\boldsymbol{\hat A}=\begin{bmatrix}0&1&0&\cdots&0\\
 		0&0&1&\cdots&0\\
 		\vdots&\vdots&\vdots&\ddots&\vdots\\
 		0&0&0&\cdots&1\\
 		\hat a_1&\hat a_2&\hat a_3&...&\hat a_n\end{bmatrix}, \boldsymbol{\hat B}=\begin{bmatrix}0\\0\\ \vdots \\0\\\hat b\end{bmatrix},
 	\label{linear hat A hat B}
 \end{equation}
 where $\hat a_1, \hat a_2, ..., \hat a_n, \hat  b \in \mathbb{R}$ are the estimated values of $a_1, a_2, ..., a_n, b$, respectively.
 
 The estimated coefficient $\hat b$ (and hence $\boldsymbol{\hat B}$) is set to arbitrary nonzero constant in advance. We learn the encoder $f$ and the estimated coefficient matrix $\boldsymbol{\hat A}$ by solving 
 \begin{equation}
 	\mathop{\min}_{\substack{\boldsymbol{\hat A}, f(\cdot)}}  \mathop{E}   [|| f(\boldsymbol{x}^{t+1})-\boldsymbol{\hat A} f(\boldsymbol{x}^{t})-\boldsymbol{\hat B} u^t||^2].
 	\label{minimizing}
 \end{equation} 
 The expectation is taken over all the states $\boldsymbol{x}^{t}\in \mathcal{X}$, generated by all $\boldsymbol{z}^{t} \in  \mathbb{R}^n$ through $g(\cdot)$, and the admissible control input $u^t \in \mathbb{R}$ , while the corresponding distributions $p_x$ and $p_u$ are unknown. This means we need to set the input to different values at different initial states. This is usually considered as counterfactual experiments in causal inference, but it is a common practice in system identification and control.
 
 Then we have the following identifiability result.

 \begin{theorem}
 	Suppose the latent dynamic system is represented as in (\ref{linear system with single-input}) and (\ref{g}). The coefficients satisfy $a_1 \neq 0$, $b \neq 0$, and $\sum_{i=1}^n a_i \neq 1$. We learn the representation and dynamic model by solving (\ref{minimizing}). Then $\tau(\boldsymbol{z})=\frac{\hat b}{b}\cdot \boldsymbol{z}$ and $\boldsymbol{\hat A}=\boldsymbol{A}$.
 	\label{Theorem linear single-input}
 \end{theorem}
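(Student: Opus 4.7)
The plan is to exploit that the minimum of (\ref{minimizing}) is $0$, attained by taking $\hat{\boldsymbol{A}} = \boldsymbol{A}$ and $f = (\hat b/b)\,g^{-1}$ (for which $\tau = (\hat b/b)\,I$ and the residual vanishes identically). Any optimizer must therefore also drive the loss to zero, and hence satisfy
\[
\tau(\boldsymbol{A}\boldsymbol{z} + \boldsymbol{B} u) = \hat{\boldsymbol{A}}\,\tau(\boldsymbol{z}) + \hat{\boldsymbol{B}}\,u \qquad (\star)
\]
for every $\boldsymbol{z}\in\mathbb{R}^{n}$ and admissible $u$. From here the proof extracts consequences of $(\star)$ in two stages: first, showing that $\tau$ must be affine; second, reading off $\tau(\boldsymbol{z}) = (\hat b/b)\,\boldsymbol{z}$ and $\hat{\boldsymbol{A}}=\boldsymbol{A}$ from a linear-algebra argument on the companion matrices.

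For the first stage, I would differentiate $(\star)$ in $u$ to get $J_\tau(\boldsymbol{A}\boldsymbol{z}+\boldsymbol{B} u)\,\boldsymbol{B} = \hat{\boldsymbol{B}}$, and in $\boldsymbol{z}$ to get $J_\tau(\boldsymbol{A}\boldsymbol{z}+\boldsymbol{B} u)\,\boldsymbol{A} = \hat{\boldsymbol{A}}\,J_\tau(\boldsymbol{z})$. Since $a_{1}\ne 0$ makes $\boldsymbol{A}$ (and every $\boldsymbol{A}^{k}$) invertible, the argument of $J_\tau$ on the left sweeps all of $\mathbb{R}^{n}$, so the first identity pins the last column of $J_\tau$ to the constant vector $(\hat b/b)\,e_{n}$. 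Setting $u=0$ in the second yields the intertwining $J_\tau(\boldsymbol{A}\boldsymbol{z})\,\boldsymbol{A}=\hat{\boldsymbol{A}}\,J_\tau(\boldsymbol{z})$. Writing this out entrywise using the companion structure produces the recursion $J_{i,j}(\boldsymbol{z})=J_{i-1,j-1}(\boldsymbol{A}\boldsymbol{z})$ for $i,j\in\{2,\dots,n\}$, together with the first-column boundary $J_{k,1}\equiv 0$ for $k\ge 2$ (the $j=1$ case of the intertwining, which uses the already-known last column). Following the diagonal case of the recursion back from $J_{n,n}\equiv\hat b/b$ and inverting $\boldsymbol{A}^{\,n-1}$ gives $J_{k,k}\equiv\hat b/b$ for every $k$; following the sub- and super-diagonal cases against the two boundary conditions kills every off-diagonal entry. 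Thus $J_\tau\equiv(\hat b/b)\,I$, so $\tau(\boldsymbol{z})=(\hat b/b)\,\boldsymbol{z}+c$ for some constant $c\in\mathbb{R}^{n}$.

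Plugging this affine $\tau$ back into $(\star)$ and matching the $\boldsymbol{z}$-, $u$-, and constant parts yields $(\hat b/b)\,\boldsymbol{A}=(\hat b/b)\,\hat{\boldsymbol{A}}$, the tautological identity $(\hat b/b)\,\boldsymbol{B}=\hat{\boldsymbol{B}}$, and $c=\boldsymbol{A}\,c$. The first gives $\hat{\boldsymbol{A}}=\boldsymbol{A}$; the hypothesis $\sum_{i} a_{i}\ne 1$ means $1$ is not an eigenvalue of $\boldsymbol{A}$, so $c=\boldsymbol{A} c$ forces $c=0$. This yields $\tau(\boldsymbol{z})=(\hat b/b)\,\boldsymbol{z}$ and $\hat{\boldsymbol{A}}=\boldsymbol{A}$, as claimed. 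I expect the main obstacle to be the entrywise bookkeeping in the second paragraph: each individual propagation is elementary, but correctly combining the last-column constancy and the first-column vanishing below the $(1,1)$ entry with the diagonal-preserving recursion, and using invertibility of $\boldsymbol{A}$ to lift pointwise statements about $J_\tau\circ\boldsymbol{A}$ to statements about $J_\tau$ on all of $\mathbb{R}^{n}$, requires a careful propagation ordering.
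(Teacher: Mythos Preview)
Your proposal is correct, and the overall architecture (zero optimum $\Rightarrow$ functional identity $(\star)$ $\Rightarrow$ $\tau$ affine $\Rightarrow$ read off $\hat{\boldsymbol{A}}=\boldsymbol{A}$ and $c=0$) matches the paper. However, your route to ``$\tau$ is affine with Jacobian $(\hat b/b)I$'' is genuinely different from the paper's, and worth contrasting.

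The paper proves affineness first by a \emph{controllability/translation-invariance} argument: it compares $(\star)$ with its $u=0$ specialization, differentiates in $\boldsymbol z$, and deduces $J_\tau(\boldsymbol y+\boldsymbol B v)=J_\tau(\boldsymbol y)$; then it iterates the dynamics $n$ steps to get $J_\tau(\boldsymbol y+\sum_{k}\boldsymbol A^{k}\boldsymbol B\,u^{t+n-1-k})=J_\tau(\boldsymbol y)$, and uses $\operatorname{span}\{\boldsymbol B,\boldsymbol A\boldsymbol B,\dots,\boldsymbol A^{n-1}\boldsymbol B\}=\mathbb{R}^n$ to conclude $J_\tau$ is constant. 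Only then, with $\tau(\boldsymbol z)=\boldsymbol T\boldsymbol z+\boldsymbol c$, does it run the linear algebra: $\boldsymbol T\boldsymbol A^{i}\boldsymbol B=\hat{\boldsymbol A}^{i}\hat{\boldsymbol B}$ forces $\boldsymbol T$ lower triangular with diagonal $\hat b/b$, and then $\boldsymbol T\boldsymbol A=\hat{\boldsymbol A}\boldsymbol T$ column-by-column forces $\boldsymbol T=(\hat b/b)I$.

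You instead work directly at the Jacobian level, exploiting the companion-matrix shift structure to get the entrywise recursion $J_{i,j}(\boldsymbol z)=J_{i-1,j-1}(\boldsymbol A\boldsymbol z)$ (for $2\le i,j\le n$, using $J_{i,n}\equiv 0$ for $i<n$) together with the two boundary conditions (last column $=(\hat b/b)e_n$, first column vanishes below the top). Propagating along diagonals gives $J_\tau\equiv(\hat b/b)I$ in one pass, with no multi-step iteration and no intermediate lower-triangular stage. This is cleaner and more direct for the single-input companion case; the paper's two-stage approach is more modular and is what they reuse (with the same Step~1 template) in the affine-nonlinear and multi-input extensions. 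Your remark that the propagation ordering needs care is apt, but the pieces you list are exactly the right ones and they do close.
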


 \begin{proof}
 	The proof contains two main steps:
 	\begin{itemize}
 		\item In step 1, we show that $\tau(\cdot)$ is an affine function by extending the Proposition 1 in \cite{ahuja2022weakly} to high-order dynamic systems. 
 		\item In step 2, we prove $\tau(\boldsymbol{z})=\frac{\hat b}{b}\cdot \boldsymbol{z}$ and $\boldsymbol{\hat A}=\boldsymbol{A}$ by utilizing the structures of $\boldsymbol{A}$, $\boldsymbol{\hat A}$, $\boldsymbol{B}$, and $\boldsymbol{\hat B}$.
 	\end{itemize}
 
 	\textbf{Step 1.}
 	By following the proof technique in \cite{ahuja2022weakly}, solving (\ref{minimizing}) implies
 	\begin{align}
 		f(\boldsymbol{x}^{t+1})&=\boldsymbol{\hat A} f(\boldsymbol{x}^{t})+\boldsymbol{\hat B} u ^{t} \notag \\
 		f \circ g(\boldsymbol{z}^{t+1})&=\boldsymbol{\hat A} f\circ g(\boldsymbol{z}^{t})+\boldsymbol{\hat B} u^{t} \notag \\
 		\tau(\boldsymbol{z}^{t+1})&=\boldsymbol{\hat A} \tau(\boldsymbol{z}^{t})+\boldsymbol{\hat B} u ^{t}.
 		\label{estimated dynamic equation}
 	\end{align}
 	Substituting (\ref{linear system with single-input}) into (\ref{estimated dynamic equation}) gives
 	\begin{equation} 
 		\tau(\boldsymbol{A} \boldsymbol{z}^{t} + \boldsymbol{B} u^{t})=\boldsymbol{\hat A} \tau(\boldsymbol{z}^{t})+\boldsymbol{\hat B} u ^{t}.
 		\label{u}
 	\end{equation}
 	Since $u^{t}$ can be set to any value in $(-U,U)$, we let $u^{t}=0$ to obtain
 	\begin{equation} 
 		\tau(\boldsymbol{A} \boldsymbol{z}^{t})=\boldsymbol{\hat A} \tau(\boldsymbol{z}^{t}).
 		\label{no u}
 	\end{equation}
 	Since $f$ and $g$ are assumed to be differentiable functions, $\tau=f \circ g$ is also a differentiable function. Taking the gradient of (\ref{u}) and (\ref{no u}) w.r.t $z^t$ gives 
 	\begin{equation}
 		\left\{
 		\begin{array}{l}
 			\boldsymbol{J}(\boldsymbol{A} \boldsymbol{z}^{t} + \boldsymbol{B} u^{t})\boldsymbol{A}=\boldsymbol{\hat A}  \boldsymbol{J}(z^{t}) \notag \\ 
 			\boldsymbol{J}(\boldsymbol{A} \boldsymbol{z}^{t})\boldsymbol{A}=\boldsymbol{\hat A}  \boldsymbol{J}(\boldsymbol{z}^{t})\notag,
 		\end{array}
 		\right.
 		\label {J}
 	\end{equation}
 	where $\boldsymbol{J}(\cdot)$ is the Jacobian of $\tau(\cdot)$. Then we have
 	\begin{equation} 
 		\boldsymbol{J}(\boldsymbol{A} \boldsymbol{z}^{t} + \boldsymbol{B} u^{t})\boldsymbol{A}=\boldsymbol{J}(\boldsymbol{A} \boldsymbol{z}^{t})\boldsymbol{A}.
 	\end{equation} 
 	The assumption $a_1 \neq 0$ implies that $\boldsymbol{A}$ is nonsingular. Since $\boldsymbol{A}$ is nonsingular and $\boldsymbol{z}^{t}$ varies within all the states, we obtain that for all $\boldsymbol{y} \in  \mathbb{R}^n$ and $v  \in (-U,U)$,
 	\begin{equation} 
 		\boldsymbol{J}(\boldsymbol{y} + \boldsymbol{B} v)-\boldsymbol{J}(\boldsymbol{y})=0.
 	\end{equation}

 	At time step $t+n$, by recursively using (\ref{linear system with single-input}) and (\ref{estimated dynamic equation}), we obtain
 	\begin{align}
 		\boldsymbol{z}^{t+n}=\boldsymbol{A}^n\boldsymbol{z}^{t}+\boldsymbol{A}^{n-1}\boldsymbol{B}u^{t}+\boldsymbol{A}^{n-2}\boldsymbol{B}u^{t+1} +...+\boldsymbol{B}u^{t+n-1},
 		\label{z tn}
 	\end{align}
 	and
 	\begin{align}
 		\tau(\boldsymbol{z}^{t+n})=\boldsymbol{\hat A}^n\tau(\boldsymbol{z}^{t})+\boldsymbol{\hat A}^{n-1} \boldsymbol{\hat B}u^{t}+\boldsymbol{\hat A}^{n-2}\boldsymbol{\hat B}u^{t+1}+...+\boldsymbol{\hat B}u^{t+n-1}.
 		\label{a tn}
 	\end{align}
 	Substituting (\ref{z tn}) into (\ref{a tn}) gives
 	\begin{align}
 		\tau(\boldsymbol{A}^n\boldsymbol{z}^{t}+\boldsymbol{A}^{n-1}\boldsymbol{B}u^{t}+...+\boldsymbol{B}u^{t+n-1}) 
 		=\boldsymbol{\hat A}^n\tau(\boldsymbol{z}^{t})+\boldsymbol{\hat A}^{n-1}\boldsymbol{\hat B}u^{t}+...+\boldsymbol{\hat B}u^{t+n-1}.
 		\label{a tn 2}
 	\end{align} 	
 	Since equation (\ref{a tn 2}) holds for all $u^{t},u^{t+1},...,u^{t+n-1}\in (-U,U)$, we let $u^{t}=u^{t+1}=,\cdots,=u^{t+n-1}=0$ to obtain
 	\begin{align}
 		\tau(\boldsymbol{A}^n\boldsymbol{z}^{t}) =\boldsymbol{\hat A}^n\tau(\boldsymbol{z}^{t}).
 		\label{no u tn 2}
 	\end{align}  
 	Taking the gradient of (\ref{a tn 2}) and (\ref{no u tn 2}) w.r.t $\boldsymbol{z}^t$ gives 	
 	\begin{align}
 		\boldsymbol{J}(\boldsymbol{A}^n\boldsymbol{z}^{t}+\boldsymbol{A}^{n-1}\boldsymbol{B}u^{t}+...+\boldsymbol{B}u^{t+n-1})\boldsymbol{A}^n =\boldsymbol{\hat A}^n \boldsymbol{J}(\boldsymbol{z}^{t}) =\boldsymbol{J}(\boldsymbol{A}^n\boldsymbol{z}^{t})\boldsymbol{A}^n.
 	\end{align} 
 	
 	Since $\boldsymbol{A}$ is nonsingular and $\boldsymbol{z}^{t}$ varies within all the states, we obtain $\forall \boldsymbol{y} \in  \mathbb{R}^n$ and $\forall u^{t},u^{t+1},...,u^{t+n-1}\in (-U,U)$,
 	\begin{equation} 
 		\boldsymbol{J}(\boldsymbol{y} + \boldsymbol{A}^{n-1}\boldsymbol{B}u^{t}+...+\boldsymbol{B}u^{t+n-1})-\boldsymbol{J}(\boldsymbol{y})=\boldsymbol{0}.
 		\label{Ja t n}
 	\end{equation} 
 	It is easy to verify that span\{${\boldsymbol{A}^{n-1}\boldsymbol{B},\boldsymbol{A}^{n-2}\boldsymbol{B},...,\boldsymbol{B}}$\}=$\mathbb{R}^n$. Then from (\ref{Ja t n}) we obtain that $\boldsymbol{J}(\boldsymbol{y})$ is a constant matrix for all $\boldsymbol{y} \in \mathbb{R}^n$. Then we conclude that $\tau(\cdot)$ is an affine function, i.e., 
 	\begin{equation} 
 		\tau(\boldsymbol{z})=\boldsymbol{T}\boldsymbol{z}+\boldsymbol{c},
 		\label{a affine}
 	\end{equation} 
 	where $\boldsymbol{T} \in \mathbb{R}^{n \times n}$ and $\boldsymbol{c} \in \mathbb{R}^{n}$ are both constant. 
 	
 	\textbf{Step 2.} Substituting (\ref{a affine}) into (\ref{u}) gives
 	\begin{align}
 		\boldsymbol{T}\boldsymbol{A}\boldsymbol{z}^{t}+\boldsymbol{T}\boldsymbol{B}u^{t}+\boldsymbol{c}=\boldsymbol{\hat A} \boldsymbol{T}\boldsymbol{z}^{t}+\boldsymbol{\hat A} \boldsymbol{c} +\boldsymbol{\hat B} u^{t},   \notag
 	\end{align} 
 	which yields 
 	\begin{equation}
 		\left\{
 		\begin{array}{l}
 			\boldsymbol{T}\boldsymbol{A} = \boldsymbol{\hat A} \boldsymbol{T} \notag\\
 			\boldsymbol{T}\boldsymbol{B} = \boldsymbol{\hat B} \notag\\
 			\boldsymbol{c}=\boldsymbol{\hat A} \boldsymbol{c}.
 		\end{array}
 		\right.
 	\end{equation}
 	$\boldsymbol{T}\boldsymbol{B}=\boldsymbol{\hat B}$ implies the last column of $\boldsymbol{T}$ is $[0,\cdots,0,\frac{\hat b}{b}]^T$. Then $\forall i=1,...,n-1$, left-multiply $\boldsymbol{T}\boldsymbol{B}=\boldsymbol{\hat B}$ by $\boldsymbol{\hat A}^i$ and use $\boldsymbol{T}\boldsymbol{A} = \boldsymbol{\hat A} \boldsymbol{T}$ to obtain 
 	\begin{equation} 
 		\boldsymbol{T}\boldsymbol{A}^{i}\boldsymbol{B} = \boldsymbol{\hat A}^{i}\boldsymbol{\hat B}.
 	\end{equation}
 	From $\boldsymbol{T}\boldsymbol{A}^{i}\boldsymbol{B} = \boldsymbol{\hat A}^{i}\boldsymbol{\hat B}$, after some calculation, we have 
 	\begin{equation}
 		\boldsymbol{T}=\begin{bmatrix}\frac{\hat b}{b}&0&0&0&0\\
 			*&\frac{\hat b}{b}&0&\cdots&0\\
 			\vdots&\vdots&\vdots&\ddots&\vdots\\
 			*&\cdots&*&\frac{\hat b}{b}&0\\
 			*&*&\cdots&*&\frac{\hat b}{b}\end{bmatrix},
 		\label{T}
 	\end{equation}
 	where $*$ denotes unimportant items. Then the equation $\boldsymbol{T}\boldsymbol{A} = \boldsymbol{\hat A} \boldsymbol{T}$ can be written as 
 	\begin{equation}
 		\begin{bmatrix}\frac{\hat b}{b}&0&0&0&0\\
 			*&\frac{\hat b}{b}&0&\cdots&0\\
 			\vdots&\vdots&\vdots&\ddots&\vdots\\
 			*&\cdots&*&\frac{\hat b}{b}&0\\
 			*&*&\cdots&*&\frac{\hat b}{b}\end{bmatrix} \begin{bmatrix}0&1&0&\cdots&0\\
 			0&0&1&\cdots&0\\
 			\vdots&\vdots&\vdots&\ddots&\vdots\\
 			0&0&0&\cdots&1\\
 			a_1&a_2&a_3&...&a_n\end{bmatrix}=\begin{bmatrix}0&1&0&\cdots&0\\
 			0&0&1&\cdots&0\\
 			\vdots&\vdots&\vdots&\ddots&\vdots\\
 			0&0&0&\cdots&1\\
 			\hat a_1&\hat a_2&\hat a_3&...&\hat a_n\end{bmatrix} \begin{bmatrix}\frac{\hat b}{b}&0&0&0&0\\
 			*&\frac{\hat b}{b}&0&\cdots&0\\
 			\vdots&\vdots&\vdots&\ddots&\vdots\\
 			*&\cdots&*&\frac{\hat b}{b}&0\\
 			*&*&\cdots&*&\frac{\hat b}{b}\end{bmatrix}.
 		\label{T}
 	\end{equation}
 	
 	The first column of both side of above equation implies the first column of $\boldsymbol{T}$ is $[\frac{\hat b}{b}, 0, \cdots, 0]^T$. Continuing this procedure to other columns gives that $\boldsymbol{T}=\frac{\hat b}{b}\boldsymbol{I}_n$, where $\boldsymbol{I}_n$ denotes n-dimensional identity matrix, and hence $\boldsymbol{\hat A}=\boldsymbol{A}$.

 	The assumption $\sum_{i=1}^n a_i \neq 1$ implies that $\boldsymbol{I}_n-\boldsymbol{A}$ is nonsingular. Then $\boldsymbol{c}=\boldsymbol{\hat A} \boldsymbol{c}= \boldsymbol{A} \boldsymbol{c}$ implies $\boldsymbol{c}=0$. Hence $\tau(\boldsymbol{z})=\frac{\hat b}{b}\boldsymbol{z}$.
 \end{proof}
 
 \begin{remark}
 	 Here, we discuss some main assumptions in Theorem \ref{Theorem linear single-input}. $a_1 \neq 0$ ensures that $\boldsymbol{A}$ is nonsingular, meaning all the states are reachable. If the input $u$ can be set to sufficiently large value, this assumption is not needed. $b \neq 0$ means that $u$ can affect this system. { These assumptions hold for many linear systems like spring-mass-damper systems or linear rigid body systems.} $\sum_{i=1}^n a_i \neq 1$ means the open-loop system is not marginal stable, i.e., when $u=0$, the equilibrium point of the system is 
 	unique. This unique equilibrium is naturally defined as the zero point.  { A frictionless rigid body system does not satisfy this assumption, but we can still derive a weaker identifiable conclusion.} In this case ($\sum_{i=1}^n a_i = 1$), an offset variable may exist, i.e., $\tau(\boldsymbol{z})=\frac{\hat b}{b}\cdot \boldsymbol{z} + \boldsymbol{c}$, where $\boldsymbol{c} \in \mathbb{R}^n$ is a constant vector. But the conclusion that $\boldsymbol{\hat A} = \boldsymbol{A}$ still holds.
 \end{remark}
 
 Theorem \ref{Theorem linear single-input} shows that, for the linear single-input system, we can identify the latents up to scaling. The only uncertainty comes from the estimation of $b$. The coefficients $a_1$, $\cdots$, $a_n$ are also identifiable, which is invariant to the estimation error of $b$.

 \subsubsection{Affine Nonlinear Cases}

 For affine nonlinear systems, we assume that the system is locally linearizable, i.e., the system is given as
 \begin{equation}
 	q^{t+n}-a_{n}(\boldsymbol{z}^{t})q^{t+n-1}-\cdots-a_{1}(\boldsymbol{z}^{t})q^{t} = b(\boldsymbol{z}^{t}) u^{t},
 	\label{nonlinear system with single-input plain}
 \end{equation}
 where $\boldsymbol{z}^{t}=[q^{t}, \cdots, q^{t+n-1}]^T$. All the coefficients are now nonlinear functions of $\boldsymbol{z}^t$ (and hence nonlinear functions of $q^{t}, \cdots, q^{t+n-1}$). The coefficient functions $a_{1}(\cdot)$, $\cdots$, $a_{n}(\cdot)$, and $b(\cdot)$ are all differentiable and bounded, and we assume that $|a_1(\cdot)|\geq h_a>0$, $|b(\cdot)|\geq h_b>0$, where $h_a$ and $h_b$ are arbitrary positive constants.

 we formalize the system (\ref{nonlinear system with single-input plain}) as:
 \begin{equation}
 	\boldsymbol{z}^{t+1}=\boldsymbol{A}(\boldsymbol{z}^{t})\boldsymbol{z}^{t} + \boldsymbol{B}(\boldsymbol{z}^{t}) u^{t},
 	\label{nonlinear system with single-input}
 \end{equation}
 where
 \begin{align}
 	&\boldsymbol{A}(\boldsymbol{z})=\begin{bmatrix}0&1&0&\cdots&0\\
 		0&0&1&\cdots&0\\
 		\vdots&\vdots&\vdots&\ddots&\vdots\\
 		0&0&0&\cdots&1\\
 		a_1(\boldsymbol{z})&a_2(\boldsymbol{z})&a_3(\boldsymbol{z})&...&a_n(\boldsymbol{z})\end{bmatrix} \notag \\
 	&\boldsymbol{B}(\boldsymbol{z})=\begin{bmatrix}0&0& \cdots &0&b(\boldsymbol{z})\end{bmatrix}^T.
 	\label{nonlinear A B}
 \end{align}
 
 The estimated matrices $\boldsymbol{\hat A}(\boldsymbol{z})$ and $\boldsymbol{\hat B}(\boldsymbol{z})$ have the same structures as in (\ref{nonlinear A B}), but the coefficient functions $a_1(\cdot)$, $a_2(\cdot)$, ..., $a_n(\cdot)$, and $b(\cdot)$ are replaced by their estimates $\hat a_1(\cdot)$, $\hat a_2(\cdot)$, ..., $\hat a_n(\cdot)$, and $\hat b(\cdot)$. The estimated coefficient functions are assumed to be differentiable. 
 
 The mixing function $g$ and the learned encoder $f$ are the same as in (\ref{g}) and (\ref{f}). The representation learning model is learned by solving 
 \begin{align}
 	\mathop{\min}_{\substack{\boldsymbol{\hat A}(\cdot), \boldsymbol{\hat B}(\cdot), f(\cdot)}}&  \mathop{E} [|| f(\boldsymbol{x}^{t+1})- \boldsymbol{\hat A}(f(\boldsymbol{x}^{t})) f(\boldsymbol{x}^{t})-\boldsymbol{\hat B}(f(\boldsymbol{x}^{t})) u^t||^2]\notag \\
 	\text{s.t.}\quad \ \ \ &  |\hat b(\cdot)|\geq \hat h_b>0.
 	\label{minimizing3}
 \end{align}
 Since the coefficient function $b(\cdot)$ is now a nonlinear function, we have to learn $\hat b(\cdot)$ instead of specifying it in advance. To prevent learning a trivial solution which maps each state to zero, we require $|\hat b(\cdot)|$ to be equal or greater than arbitrary positive constant $\hat h_b$.

 Then we have the following identifiability result.
 
 \begin{theorem}
 	Suppose the latent dynamic system is represented as in (\ref{nonlinear system with single-input}) and (\ref{g}). The coefficient functions $a_1(\cdot), \cdots, a_n(\cdot)$, and $b(\cdot)$ are all differentiable and bounded. $|a_1(\cdot)|\geq h_a>0$ and $|b(\cdot)|\geq h_b>0$. The equation $\boldsymbol{c}=\boldsymbol{A}(\boldsymbol{c})\boldsymbol{c}$ has a unique solution $\boldsymbol{c}=\boldsymbol{0}$. We learn the representation and dynamic model by solving (\ref{minimizing3}). Then $\boldsymbol{\hat z}=\tau(\boldsymbol{z})=\Delta b\cdot \boldsymbol{z}$
 	, where $\Delta b \in \mathbb{R}$ is a nonzero constant, $\boldsymbol{\hat A}\left(\boldsymbol{\hat z} \right) \boldsymbol{\hat z} = \Delta b \boldsymbol{A}(\boldsymbol{z})\boldsymbol{z}$ and $\hat b(\boldsymbol{\hat z})=\Delta b \cdot b(\boldsymbol{z})$.
 	\label{Theorem affine nonlinear single-input}
 \end{theorem}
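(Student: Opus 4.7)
The plan is to mirror the two-step structure of the proof of Theorem~\ref{Theorem linear single-input}, adapted to the state-dependent matrices $\boldsymbol{A}(\cdot)$ and $\boldsymbol{B}(\cdot)$. For Step~1, I would first derive, as in (\ref{estimated dynamic equation})--(\ref{u}), the identity
\[
\tau\bigl(\boldsymbol{A}(\boldsymbol{z}^t)\boldsymbol{z}^t+\boldsymbol{B}(\boldsymbol{z}^t)u^t\bigr)=\boldsymbol{\hat A}(\tau(\boldsymbol{z}^t))\tau(\boldsymbol{z}^t)+\boldsymbol{\hat B}(\tau(\boldsymbol{z}^t))u^t,
\]
and differentiate with respect to $u^t$ (then compare to the $u^t=0$ case) to show that the Jacobian of $\tau$ applied along $\boldsymbol{B}(\boldsymbol{z}^t)$ does not depend on $u^t$. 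Rolling the dynamics out $n$ steps and perturbing $u^t,\dots,u^{t+n-1}$ one coordinate at a time produces $n$ state-dependent control directions, whose stacked Jacobian evaluated at zero input is the nonlinear controllability matrix of the canonical form. Together with the super-diagonal of ones in $\boldsymbol{A}(\cdot)$, the bounds $|b(\cdot)|\ge h_b>0$ and $|a_1(\cdot)|\ge h_a>0$ make this matrix lower anti-triangular with entries involving $b(\cdot)$ on the anti-diagonal and keep the intermediate Jacobians $\nabla_{\boldsymbol{z}}[\boldsymbol{A}(\boldsymbol{z})\boldsymbol{z}]$ invertible, so the $n$ directions span $\mathbb{R}^n$ uniformly in the state. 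This forces the Jacobian of $\tau$ to be a constant matrix and hence $\tau(\boldsymbol{z})=\boldsymbol{T}\boldsymbol{z}+\boldsymbol{c}$ for constants $\boldsymbol{T}\in\mathbb{R}^{n\times n}$ and $\boldsymbol{c}\in\mathbb{R}^n$.

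For Step~2, substituting the affine $\tau$ into the optimality identity and matching the $u^t$-dependent and $u^t$-independent parts gives
\[
\boldsymbol{T}\boldsymbol{B}(\boldsymbol{z})=\boldsymbol{\hat B}(\boldsymbol{T}\boldsymbol{z}+\boldsymbol{c}),\qquad \boldsymbol{T}\boldsymbol{A}(\boldsymbol{z})\boldsymbol{z}+\boldsymbol{c}=\boldsymbol{\hat A}(\boldsymbol{T}\boldsymbol{z}+\boldsymbol{c})(\boldsymbol{T}\boldsymbol{z}+\boldsymbol{c}).
\]
Because $\boldsymbol{B}(\boldsymbol{z})$ and $\boldsymbol{\hat B}(\cdot)$ are supported only in the last coordinate, the first identity pins the last column of $\boldsymbol{T}$ to $[0,\dots,0,\Delta b]^T$ with $\Delta b\cdot b(\boldsymbol{z})=\hat b(\boldsymbol{T}\boldsymbol{z}+\boldsymbol{c})$ and $\Delta b\ne 0$. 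Reusing the $n$-step roll-out from Step~1 produces state-dependent analogues of the linear identities $\boldsymbol{T}\boldsymbol{A}^i\boldsymbol{B}=\boldsymbol{\hat A}^i\boldsymbol{\hat B}$; each such analogue, combined with the super-diagonal-of-ones structure of $\boldsymbol{A}(\cdot)$ and the column-by-column argument of Theorem~\ref{Theorem linear single-input}, fixes one more column of $\boldsymbol{T}$, and iterating gives $\boldsymbol{T}=\Delta b\cdot \boldsymbol{I}_n$. To eliminate $\boldsymbol{c}$, I would set $\boldsymbol{z}=-\boldsymbol{c}/\Delta b$ in the second identity, which collapses the right-hand side to $\boldsymbol{\hat A}(\boldsymbol{0})\cdot\boldsymbol{0}=\boldsymbol{0}$; letting $\tilde{\boldsymbol{c}}\doteq-\boldsymbol{c}/\Delta b$, the resulting equation is $\tilde{\boldsymbol{c}}=\boldsymbol{A}(\tilde{\boldsymbol{c}})\tilde{\boldsymbol{c}}$, so the uniqueness hypothesis forces $\boldsymbol{c}=\boldsymbol{0}$. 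The stated identities $\hat b(\boldsymbol{\hat z})=\Delta b\cdot b(\boldsymbol{z})$ and $\boldsymbol{\hat A}(\boldsymbol{\hat z})\boldsymbol{\hat z}=\Delta b\,\boldsymbol{A}(\boldsymbol{z})\boldsymbol{z}$ are then immediate.

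The hard part will be Step~1: the intermediate state $\boldsymbol{z}^{t+1}$ itself depends on $u^t$, so differentiating the $n$-step composition with respect to the input sequence generates product-rule terms that must be tracked carefully, and uniform spanning of the resulting nonlinear controllability directions over the full state space has to be certified from the scalar bounds $|a_1(\cdot)|\ge h_a$ and $|b(\cdot)|\ge h_b$ alone. A secondary difficulty in Step~2 is that one only has the state-dependent vector identity $\boldsymbol{T}\boldsymbol{A}(\boldsymbol{z})\boldsymbol{z}+\boldsymbol{c}=\boldsymbol{\hat A}(\boldsymbol{T}\boldsymbol{z}+\boldsymbol{c})(\boldsymbol{T}\boldsymbol{z}+\boldsymbol{c})$, rather than the clean matrix identity $\boldsymbol{T}\boldsymbol{A}=\boldsymbol{\hat A}\boldsymbol{T}$ available in the linear case, so the column-by-column induction has to be rerun with $\boldsymbol{z}$ varying to extract enough scalar constraints.
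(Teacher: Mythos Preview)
Your Step~1 does not go through as stated. In the linear case one differentiates the optimality identity with respect to $\boldsymbol{z}^t$, obtains $\boldsymbol{J}(\boldsymbol{A}\boldsymbol{z}+\boldsymbol{B}u)\boldsymbol{A}=\boldsymbol{\hat A}\boldsymbol{J}(\boldsymbol{z})=\boldsymbol{J}(\boldsymbol{A}\boldsymbol{z})\boldsymbol{A}$, and cancels $\boldsymbol{A}$ to conclude that the \emph{entire} Jacobian is invariant along the $\boldsymbol{B}$-direction; the $n$-step roll-out then upgrades this to constancy on $\mathbb{R}^n$. In the affine nonlinear case this cancellation fails: differentiating in $\boldsymbol{z}^t$ produces $\boldsymbol{J}(\boldsymbol{y})\bigl[\partial(\boldsymbol{A}(\boldsymbol{z})\boldsymbol{z})/\partial\boldsymbol{z}+\partial\boldsymbol{B}(\boldsymbol{z})/\partial\boldsymbol{z}\,u\bigr]$ on the left and $\partial\bigl[\boldsymbol{\hat A}(\tau(\boldsymbol{z}))\tau(\boldsymbol{z})\bigr]/\partial\boldsymbol{z}+\partial\boldsymbol{\hat B}(\tau(\boldsymbol{z}))/\partial\boldsymbol{z}\,u$ on the right, and the $u$-dependent pieces on the two sides do not match up to give $\boldsymbol{J}(\boldsymbol{y})=\boldsymbol{J}(\boldsymbol{\omega})$. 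Differentiating instead in $u^t$ yields only $\boldsymbol{J}(\boldsymbol{y})\boldsymbol{B}(\boldsymbol{z})=\boldsymbol{\hat B}(\tau(\boldsymbol{z}))$, which constrains only the \emph{last column} of $\boldsymbol{J}$, not the full matrix. Rolling out $n$ steps and differentiating in $u^{t},\dots,u^{t+n-1}$ likewise pins down, one at a time, the form of columns $n,n-1,\dots,1$ of $\boldsymbol{J}$ (zeros above a shifting diagonal position), but never says $\boldsymbol{J}$ is invariant along any direction. So the ``spanning $\Rightarrow$ constant Jacobian'' step is unsupported here, and Step~2 as you wrote it (which presupposes an affine $\tau$) cannot start.

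The paper's route avoids this by never trying to get constancy directly. It uses the $u$-derivatives and the $n$-step roll-out only to show $\boldsymbol{J}(\cdot)$ is \emph{lower triangular} with diagonal entries of the form $\hat b(\tau(\boldsymbol{p}_k))/b(\boldsymbol{p}_k)$; then it differentiates the $u=0$ identity in $\boldsymbol{z}$ and uses the canonical-form shift structure of $\boldsymbol{A}(\cdot)$ and $\boldsymbol{\hat A}(\cdot)$ to kill the subdiagonal entries column by column, making $\boldsymbol{J}(\cdot)$ \emph{diagonal}; finally, from the last row of the $\boldsymbol{z}$-derivative identity it extracts $\tfrac{\hat b(\tau(\boldsymbol{z}))}{b(\boldsymbol{z})}\partial_{\boldsymbol{z}}b=\partial_{\boldsymbol{z}}\hat b(\tau(\boldsymbol{z}))$, i.e.\ $\partial_{\boldsymbol{z}}\bigl[\hat b(\tau(\boldsymbol{z}))/b(\boldsymbol{z})\bigr]=0$, which forces the common diagonal value to be a constant $\Delta b$. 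Only then is $\tau$ affine, and your argument for $\boldsymbol{c}=\boldsymbol{0}$ (setting $\boldsymbol{z}=-\boldsymbol{c}/\Delta b$ and invoking the uniqueness hypothesis) is exactly what the paper does at the end.
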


  \begin{proof}
  	 In the affine nonlinear case, it is hard to directly deduce that $\tau(\cdot)$ is an affine function as what we did in the linear case. Hence, we instead first study the structure of the Jacobian of $\tau(\cdot)$, denoted as $J(\cdot)$. The proof mainly contains four steps:
  	 \begin{itemize}
  	 	\item In step 1, we follow \cite{ahuja2022weakly} to derive (\ref{nonlinear u}).
  	 	\item In step 2, we take the gradient of (\ref{nonlinear u}) w.r.t $u^{t}$, and leverage the structures of $\boldsymbol{B}(\cdot)$ and $\boldsymbol{\hat B}(\cdot)$ to show that the last column of $\boldsymbol{J}(\cdot)$ is $[0, \cdots, 0, \ast]^T$. Then by following similar procedures up to time step $t+n$ and taking the structures of $\boldsymbol{A}(\cdot)$ and $\boldsymbol{\hat A}(\cdot)$ in to consideration, we show that $\boldsymbol{J}(\cdot)$ is a lower triangular matrix (equation (\ref{J_:,:})).
  	 	\item In step 3, we let $u^t=0$ in (\ref{nonlinear u}) and take the gradient of (\ref{nonlinear u}) w.r.t $\boldsymbol{z}^{t}$. Then we use the structures of $\boldsymbol{A}(\cdot)$, $\boldsymbol{\hat A}(\cdot)$, and $\boldsymbol{J}(\cdot)$ to show that $\boldsymbol{J}(\cdot)$ is a diagonal matrix (equation (\ref{Jz diag})).
  	 	\item In step 4, we take the gradient of (\ref{nonlinear u}) w.r.t $\boldsymbol{z}^{t}$. After some calculation we can obtain $\frac{\hat b(\tau(\boldsymbol{z}^{t}))}{b(\boldsymbol{z}^{t})} \frac{\partial b(\boldsymbol{z}^{t})}{\partial \boldsymbol{z}^{t}} =\frac{\partial \hat b(\tau(\boldsymbol{z}^{t}))}{\partial \boldsymbol{z}^{t}}$, which gives $\frac{\hat b(\tau(\boldsymbol{z}^{t}))}{b(\boldsymbol{z}^{t})}= \Delta b$. Then we can prove all the conclusions in this theorem. 
  	 \end{itemize}

  	\textbf{Step 1.}
  	By following the proof technique in \cite{ahuja2022weakly}, solving (\ref{minimizing3}) implies 
  	\begin{align}
  		f(\boldsymbol{x}^{t+1})&=\boldsymbol{\hat A}\left( f(\boldsymbol{x}^{t}) \right) f(\boldsymbol{x}^{t})+\boldsymbol{\hat B}\left(f(\boldsymbol{x}^{t})\right) u ^{t} \notag \\
  		f \circ g(\boldsymbol{z}^{t+1})&=\boldsymbol{\hat A} \left( f\circ g(\boldsymbol{z}^{t}) \right) f\circ g(\boldsymbol{z}^{t})+\boldsymbol{\hat B}\left( f\circ g(\boldsymbol{z}^{t}) \right) u^{t} \notag \\
  		\tau(\boldsymbol{z}^{t+1})&=\boldsymbol{\hat A}\left(  \tau(\boldsymbol{z}^{t}) \right) \tau(\boldsymbol{z}^{t})+\boldsymbol{\hat B}\left( \tau(\boldsymbol{z}^{t})\right) u ^{t}.
  		\label{nonlinear estimated dynamic equation}
  	\end{align}
  	Substituting (\ref{nonlinear system with single-input}) into (\ref{nonlinear estimated dynamic equation}) gives
  	\begin{equation} 
  		\tau\left(\boldsymbol{A} (\boldsymbol{z}^{t})\boldsymbol{z}^{t} + \boldsymbol{B}(\boldsymbol{z}^{t}) u^{t}\right)=\boldsymbol{\hat A}\left(  \tau(\boldsymbol{z}^{t}) \right) \tau(\boldsymbol{z}^{t})+\boldsymbol{\hat B}\left(\tau(\boldsymbol{z}^{t})\right) u ^{t}.
  		\label{nonlinear u}
  	\end{equation}
  	
  	\textbf{Step 2.} Taking the gradient of (\ref{nonlinear u}) w.r.t $u^{t}$ gives 
  	\begin{align}
  		\boldsymbol{J}\left(\boldsymbol{A}(\boldsymbol{z}^{t})\boldsymbol{z}^{t} + \boldsymbol{B}(\boldsymbol{z}^{t}) u^{t}\right)\boldsymbol{B}(\boldsymbol{z}^{t})  =\boldsymbol{\hat B}\left(\tau(\boldsymbol{z}^{t})\right).
  		\label{nonlinear u J }
  	\end{align}
  	Let 
  	\begin{equation} 
  		\boldsymbol{y}(\boldsymbol{z}^{t},u^{t})=\boldsymbol{A}(\boldsymbol{z}^{t})\boldsymbol{z}^{t} + \boldsymbol{B}(\boldsymbol{z}^{t})u^{t}.
  		\label{y = Az}
  	\end{equation}
  	Equation (\ref{y = Az}) can be rewrite as
  	\begin{equation}
  		\left\{
  		\begin{array}{l}
  			y_1=z_2^t	\notag \\
  			y_2=z_3^t	\notag \\
  			\cdots \notag\\
  			y_{n-1}=z_n^t \notag \\
  			y_n=a_1(z^t)z_1^t+a_2(z^t)z_2^t+\cdots+a_n(z^t)z_n^t+b(z^t)u^t,
  		\end{array}
  		\right.
  	\end{equation} 
  	where $\forall i =1, \cdots, n$, $z_i^t$ and $y_i$ denote the $i$-th component of $\boldsymbol{z}^t$ and $\boldsymbol{y}$, respectively. If we fix $\boldsymbol{y}$ and $u$, above equation becomes
  	\begin{equation} 
  		z_1^t=\frac{y_n-a_2(z_1^t,y)y_1-\cdots-a_n(z_1^t,y)y_{n-1}-b(z_1^t,y)u^t}{a_1(z_1^t,y)}. 
  	\end{equation}
  	Since we assume $|a_1(\cdot)| \geq h_a >0$ and all the coefficient functions are differentiable and bounded. The right-hand side of above equation is a continuous and bounded function of $z_1^t$, which implies the existence of $z_1^t$, and hence implies the existence of $\boldsymbol{z}^t$ in equation (\ref{y = Az}).
  	
  	This gives that $\forall \boldsymbol{y}\in \mathbb{R}^n, u^t\in \mathbb{R}$, $\exists \boldsymbol{z}^t$ such that equation (\ref{y = Az}) holds.  Hence, when $\boldsymbol{z}^{t}$ varies within $\mathbb{R}^n$, $\boldsymbol{y}(\boldsymbol{z}^{t},u^{t})$ also varies within $\mathbb{R}^n$. Then we obtain that for all $\boldsymbol{y} \in  \mathbb{R}^n$,
  	\begin{align}
  		\boldsymbol{J}\left(\boldsymbol{y}(\boldsymbol{z}^{t},u^{t})\right)\boldsymbol{B}(\boldsymbol{z}^{t})  =\boldsymbol{\hat B}\left(\tau(\boldsymbol{z}^{t})\right),
  	\end{align}
  	and hence the last column of $\boldsymbol{J}(\boldsymbol{y}(\boldsymbol{z}^{t},u^{t}))$ has the following form
  	\begin{equation} 
  		\boldsymbol{J}_{\cdot,n}\left(\boldsymbol{y}(\boldsymbol{z}^{t},u^{t})\right) = [0,\cdots,0,\frac{\hat b\left(\tau(\boldsymbol{z}^{t})\right)}{b(\boldsymbol{z}^{t})}]^T.
  		\label{J_:,n}
  	\end{equation}

  	We then consider time step $t+2$ and let $u^{t+1}=0$. From (\ref{nonlinear system with single-input}) and (\ref{nonlinear estimated dynamic equation}), we obtain
  	\begin{equation}
  		\left\{
  		\begin{array}{l}
  			\boldsymbol{z}^{t+2}=\boldsymbol{A}(\boldsymbol{z}^{t+1})\boldsymbol{A}(\boldsymbol{z}^{t})\boldsymbol{z}^{t}+\boldsymbol{A}(\boldsymbol{z}^{t+1})\boldsymbol{B}(\boldsymbol{z}^{t})u^{t}  \notag \\ 
  			\tau(\boldsymbol{z}^{t+2})=\boldsymbol{\hat A}\left(  \tau(\boldsymbol{z}^{t+1}) \right) \boldsymbol{\hat A}\left(  \tau(\boldsymbol{z}^{t}) \right) \tau(\boldsymbol{z}^{t}) + \boldsymbol{\hat A}\left(  \tau(\boldsymbol{z}^{t+1}) \right) \boldsymbol{\hat B}(\tau(\boldsymbol{z}^{t})) u ^{t}     \notag,
  		\end{array}
  		\right.
  	\end{equation} 
  	which gives
  	\begin{align}
  		\tau\left(\boldsymbol{A}(\boldsymbol{z}^{t+1})\boldsymbol{A}(\boldsymbol{z}^{t})\boldsymbol{z}^{t}+\boldsymbol{A}(\boldsymbol{z}^{t+1})\boldsymbol{B}(\boldsymbol{z}^{t})u^{t}\right)  
  		= \boldsymbol{\hat A}\left(  \tau(\boldsymbol{z}^{t+1}) \right) \boldsymbol{\hat A}\left(  \tau(\boldsymbol{z}^{t}) \right) \tau(\boldsymbol{z}^{t}) + \boldsymbol{\hat A}\left(  \tau(\boldsymbol{z}^{t+1}) \right) \boldsymbol{\hat B}\left(\tau(\boldsymbol{z}^{t})\right) u ^{t}.
  		\label{nonlinear a u t2}
  	\end{align}
  	Let $\boldsymbol{y}'(\boldsymbol{z}^{t},u^{t})=\boldsymbol{A}(\boldsymbol{z}^{t+1})\boldsymbol{A}(\boldsymbol{z}^{t})\boldsymbol{z}^{t}+\boldsymbol{A}(\boldsymbol{z}^{t+1})\boldsymbol{B}(\boldsymbol{z}^{t})u^{t}$. Notice that $\boldsymbol{z}^{t+1}$ is a function of $\boldsymbol{z}^{t}$ and $u^{t}$, but according to the structure of $\boldsymbol{A}(\cdot)$, only items in the last row of $\boldsymbol{A}(\boldsymbol{z}^{t+1})$ are functions of $\boldsymbol{z}^{t+1}$ (and hence are functions of $\boldsymbol{z}^{t}$ and $u^{t}$). Taking the gradient of (\ref{nonlinear a u t2}) w.r.t $u^{t}$ gives 
  	\begin{align}
  		\boldsymbol{J}\left(\boldsymbol{y}'(\boldsymbol{z}^{t},u^{t})\right)\cdot [0,\cdots,0,b(\boldsymbol{z}^{t}),\ast]^T  
  		=  [0,\cdots,0,\hat b\left(\tau(\boldsymbol{z}^{t})\right),\ast]^T.
  		\label{nonlinear J u t2}
  	\end{align}
  	Again, the assumption $|a_1(\cdot)| \geq h_a >0$, all coefficient functions are differentiable and bounded, and the structure of $\boldsymbol{A}(\cdot)$ ensure that when $\boldsymbol{z}^{t}$ varies within $\mathbb{R}^n$, $\boldsymbol{y}'(\boldsymbol{z}^{t},u^{t})$ also varies within $\mathbb{R}^n$. Then (\ref{nonlinear J u t2}) implies $\forall \boldsymbol{y}' \in  \mathbb{R}^n$, the $(n-1)$-th column of $\boldsymbol{J}(\boldsymbol{y}'(\boldsymbol{z}^{t},u^{t}))$ has the following form
  	\begin{equation} 
  		\boldsymbol{J}_{\cdot,n-1}\left(\boldsymbol{y}'(\boldsymbol{z}^{t},u^{t})\right) = [0,\cdots,0,\frac{\hat b\left(\tau(\boldsymbol{z}^{t})\right)}{b(\boldsymbol{z}^{t})},\ast]^T,
  		\label{J_:,n-1}
  	\end{equation}
  	By following similar procedures as above up to time step $t+n$, we obtain that $\forall \boldsymbol{z} \in \mathbb{R}^n$, there exist $\boldsymbol{p}_1$, $\boldsymbol{p}_2$, $\cdots$, $\boldsymbol{p}_n \in \mathbb{R}^n$, such that
  	\begin{equation} 
  		\boldsymbol{J}(\boldsymbol{z}) = \begin{bmatrix}\frac{\hat b(\tau(\boldsymbol{p}_1))}{b(\boldsymbol{p}_1)}&0&0&0&0\\
  			*&\frac{\hat b(\tau(\boldsymbol{p}_2))}{b(\boldsymbol{p}_2)}&0&\cdots&0\\
  			\vdots&\vdots&\vdots&\ddots&\vdots\\
  			*&\cdots&*&\frac{\hat b(\tau(\boldsymbol{p}_{n-1}))}{b(\boldsymbol{p}_{n-1})}&0\\
  			*&*&\cdots&*&\frac{\hat b(\tau(\boldsymbol{p}_n))}{b(\boldsymbol{p}_n)}\end{bmatrix}.
  		\label{J_:,:}
  	\end{equation}

  	\textbf{Step 3.} Let $u^{t}=0$ in (\ref{nonlinear u}), we obtain
  	\begin{equation} 
  		\tau\left(\boldsymbol{A} (\boldsymbol{z}^{t})\boldsymbol{z}^{t}\right)=\boldsymbol{\hat A}\left(  \tau(\boldsymbol{z}^{t}) \right) \tau(\boldsymbol{z}^{t}).
  		\label{nonlinear u=0}
  	\end{equation}
  	
  	Let $\boldsymbol{\omega}\doteq \boldsymbol{y}(\boldsymbol{z}^t,0) = \boldsymbol{A}(\boldsymbol{z}^{t}) \boldsymbol{z}^{t}$, and take the gradient of (\ref{nonlinear u=0}) w.r.t $\boldsymbol{z}^t$, we obtain
  	\begin{equation}
  		\boldsymbol{J}(\boldsymbol{\omega}) \cdot \frac{\partial \boldsymbol{\omega}}{\partial \boldsymbol{z}^t} =\frac{\partial}{\partial \boldsymbol{z}^t}\left(\boldsymbol{\hat A}\left(  \tau(\boldsymbol{z}^{t}) \right) \tau(\boldsymbol{z}^{t})\right ), 
  		\label{omega d z}
  	\end{equation}
  	which gives
  	\begin{equation}
  		\boldsymbol{J}(\boldsymbol{\omega}) \cdot \begin{bmatrix}0&1&0&\cdots&0\\
  			0&0&1&\cdots&0\\
  			\vdots&\vdots&\vdots&\ddots&\vdots\\
  			0&0&0&\cdots&1\\
  			*&*&*&...&*\end{bmatrix} = \begin{bmatrix}J_{2,1}(\boldsymbol{z}^{t})&\cdots&J_{2,n}(\boldsymbol{z}^{t}) \\
  			\vdots&\ddots&\vdots\\
  			J_{n,1}(\boldsymbol{z}^{t})&\cdots&J_{n,n}(\boldsymbol{z}^{t})\\
  			*&\cdots&*\\ \end{bmatrix}.
  		\label{JA=J}
  	\end{equation}
  	Taking the structure of $\boldsymbol{J}(\cdot)$ into consideration (see (\ref{J_:,:})), we rewrite equation (\ref{JA=J}) as
  	\begin{align}
  		&\begin{bmatrix}*&0&\cdots&\cdots&0\\
  			J_{2,1}(\boldsymbol{\omega})&*&0&\cdots&0\\
  			\vdots&\vdots&\ddots&\vdots&\vdots\\
  			J_{n-1,1}(\boldsymbol{\omega})&\cdots&J_{n-2,n-1}(\boldsymbol{\omega})&*&0\\
  			J_{n,1}(\boldsymbol{\omega})&\cdots&\cdots&J_{n,n-1}(\boldsymbol{\omega})&*\end{bmatrix} 
  		\cdot 
  		\begin{bmatrix}0&1&0&\cdots&0\\
  			0&0&1&\cdots&0\\
  			\vdots&\vdots&\vdots&\ddots&\vdots\\
  			0&0&0&\cdots&1\\
  			*&*&*&...&*\end{bmatrix} \notag\\
  		=& 
  		\begin{bmatrix}J_{2,1}(z^{t})&*&0&\cdots&0\\
  			\vdots&\vdots&\ddots&\vdots&\vdots\\
  			J_{n-1,1}(z^{t})&\cdots&J_{n-2,n-1}(z^{t})&*&0\\
  			J_{n,1}(z^{t})&\cdots&\cdots&J_{n,n-1}(z^{t})&*\\
  			*&\cdots&\cdots&\cdots&*\\ \end{bmatrix}. 
  		\label{JA=J ex}
  	\end{align}

  	Considering the first column of both side of above equation, we obtain 
  	\begin{equation}
  		J_{2,1}(\boldsymbol{z}^{t})=J_{3,1}(\boldsymbol{z}^{t})=\cdots=J_{n,1}(\boldsymbol{z}^{t})=0.
  		\label{J:,1=0 z}
  	\end{equation}
  	Since above results hold for all $\boldsymbol{z}^t \in \mathbb{R}^n$, we also have
  	\begin{equation}
  		J_{2,1}(\boldsymbol{\omega})=J_{3,1}(\boldsymbol{\omega})=\cdots=J_{n,1}(\boldsymbol{\omega})=0.
  		\label{J:,1=0 Az}
  	\end{equation}
  	Substituting (\ref{J:,1=0 z}) and (\ref{J:,1=0 Az}) into (\ref{JA=J ex}) gives
  	\begin{align}
  		&\begin{bmatrix}*&0&\cdots&\cdots&\cdots&0\\
  			0&*&0&\cdots&\cdots&0\\
  			0&J_{3,2}(\boldsymbol{\omega}) &*&\cdots&\cdots&0\\
  			\vdots&\vdots&\vdots&\ddots&\vdots&\vdots\\
  			0&J_{n-1,2}(\boldsymbol{\omega})&\cdots&J_{n-2,n-1}(\boldsymbol{\omega})&*&0\\
  			0&J_{n,2}(\boldsymbol{\omega})&\cdots&\cdots&J_{n,n-1}(\boldsymbol{\omega})&*\end{bmatrix} 
  		\cdot 
  		\begin{bmatrix}0&1&0&\cdots&0\\
  			0&0&1&\cdots&0\\
  			\vdots&\vdots&\vdots&\ddots&\vdots\\
  			0&0&0&\cdots&1\\
  			*&*&*&...&*\end{bmatrix} \notag\\
  		=& 
  		\begin{bmatrix}0&*&0&\cdots&\cdots&0\\
  			0&J_{3,2}(\boldsymbol{z}^{t})&*&0&\cdots&0\\
  			\vdots&\vdots&\vdots&\ddots&\vdots&\vdots\\
  			0&J_{n-1,2}(\boldsymbol{z}^{t})&\cdots&J_{n-2,n-1}(\boldsymbol{z}^{t})&*&0\\
  			0&J_{n,2}(\boldsymbol{z}^{t})&\cdots&\cdots&J_{n,n-1}(\boldsymbol{z}^{t})&*\\
  			*&\cdots&\cdots&\cdots&*\\ \end{bmatrix}. 
  	\end{align}
  	Considering the second column of both side of above equation, we obtain 
  	\begin{equation}
  		J_{3,2}(\boldsymbol{z}^{t})=J_{4,2}(\boldsymbol{z}^{t})=\cdots=J_{n,2}(\boldsymbol{z}^{t})=0,
  	\end{equation}
  	and
  	\begin{equation}
  		J_{3,2}(\boldsymbol{\omega})=J_{4,2}(\boldsymbol{\omega})=\cdots=J_{n,2}(\boldsymbol{\omega})=0.
  	\end{equation}
  	By repeating above procedures, we obtain that $\boldsymbol{J}(\cdot)$ is a diagonal matrix, hence we can rewrite (\ref{J_:,:}) as 
  	\begin{equation}
  		\boldsymbol{J}(\boldsymbol{z})=\mbox{diag} \{\frac{\hat b(\tau(\boldsymbol{p}_1))}{b(\boldsymbol{p}_1)}, \cdots, \frac{\hat b(\tau(\boldsymbol{p}_n))}{b(\boldsymbol{p}_n)}\}.
  		\label{Jz diag}
  	\end{equation}

  	\textbf{Step 4.} Taking the gradient of (\ref{nonlinear u}) w.r.t $\boldsymbol{z}^t$, we obtain
  	\begin{equation}
  		\boldsymbol{J}\left(\boldsymbol{y}(\boldsymbol{z}^{t},u^{t})\right)\left[\frac{\partial \boldsymbol{A} (\boldsymbol{z}^{t})\boldsymbol{z}^{t}}{\partial \boldsymbol{z}^t} + \frac{\partial \boldsymbol{B}(\boldsymbol{z}^{t})}{\partial \boldsymbol{z}^t} u^{t}\right] = \frac{\partial \boldsymbol{\hat A}\left(  \tau(\boldsymbol{z}^{t}) \right) \tau(\boldsymbol{z}^{t})}{\partial \boldsymbol{z}^t} + \frac{\partial \boldsymbol{\hat B}\left(\tau(\boldsymbol{z}^{t})\right)}{\partial \boldsymbol{z}^t} u ^{t}.
  	\end{equation}
  	We use $[\boldsymbol{M}]_{n,\cdot}$ to denote the $n$-th row of a matrix $\boldsymbol{M}$, and write the last line of above equation as
  	\begin{equation}
  		\boldsymbol{J}_{n,\cdot}\left(\boldsymbol{y}(\boldsymbol{z}^{t},u^{t})\right)\left[\frac{\partial \boldsymbol{A} (\boldsymbol{z}^{t})\boldsymbol{z}^{t}}{\partial \boldsymbol{z}^t} + \frac{\partial \boldsymbol{B}(\boldsymbol{z}^{t})}{\partial \boldsymbol{z}^t} u^{t}\right] = \left[\frac{\partial \boldsymbol{\hat A}\left(  \tau(z^{t}) \right) \tau(\boldsymbol{z}^{t})}{\partial \boldsymbol{z}^t}\right]_{n,\cdot} + \left[\frac{\partial \boldsymbol{\hat B}\left(\tau(\boldsymbol{z}^{t})\right)}{\partial \boldsymbol{z}^t} u ^{t}\right]_{n,\cdot}
  		\label{last line y}.
  	\end{equation}
  	We then write the last line of equation (\ref{omega d z}) as 
  	\begin{equation}
  		\boldsymbol{J}_{n,\cdot}(\boldsymbol{\omega})\frac{\partial \boldsymbol{\omega}}{\partial \boldsymbol{z}^t} = \left[\frac{\partial \boldsymbol{\hat A}\left(  \tau(\boldsymbol{z}^{t}) \right) \tau(\boldsymbol{z}^{t})}{\partial \boldsymbol{z}^t}\right]_{n,\cdot} 
  		\label{last line omega}
  	\end{equation}
  	Since $\boldsymbol{J}(\cdot)$ is a diagonal matrix (see (\ref{Jz diag})), from (\ref{J_:,n}) we obtain that
  	\begin{equation} 
  		\boldsymbol{J}_{n,\cdot}\left(\boldsymbol{y}(\boldsymbol{z}^{t},u^{t})\right) = \left[0,\cdots,0,\frac{\hat b(\tau(\boldsymbol{z}^{t}))}{b(\boldsymbol{z}^{t})}\right] =\boldsymbol{J}_{n,\cdot}\left(\boldsymbol{y}(\boldsymbol{z}^{t},0)\right) = \boldsymbol{J}_{n,\cdot}(\boldsymbol{\omega})
  		\label{J_{n,:}}
  	\end{equation} 
  	Then from (\ref{last line y}), (\ref{last line omega}), and (\ref{J_{n,:}}) we obtain that
  	\begin{equation}
  		\left[0,\cdots,0,\frac{\hat b(\tau(\boldsymbol{z}^{t}))}{b(\boldsymbol{z}^{t})}\right] \cdot \left[ \frac{\partial \boldsymbol{B}(\boldsymbol{z}^{t})}{\partial \boldsymbol{z}^{t}} u^{t}\right] = \left[\frac{\partial \boldsymbol{\hat B}\left(\tau(\boldsymbol{z}^{t})\right)}{\partial \boldsymbol{z}^{t}} u ^{t}\right]_{n,\cdot},
  		\label{last line}
  	\end{equation}
  	which gives
  	\begin{equation}
  		\frac{\hat b(\tau(\boldsymbol{z}^{t}))}{b(\boldsymbol{z}^{t})} \cdot \frac{\partial b(\boldsymbol{z}^{t})}{\partial \boldsymbol{z}^{t}} =\frac{\partial \hat b(\tau(\boldsymbol{z}^{t}))}{\partial \boldsymbol{z}^{t}}.
  	\end{equation}
  	Above equation means
  	\begin{equation}
  		\frac{\partial \left[\hat b(\tau(\boldsymbol{z}^{t}))/ b(\boldsymbol{z}^{t})\right]}{\partial \boldsymbol{z}^{t}} =0,
  	\end{equation}
  	and hence
  	\begin{equation}
  		\frac{\hat b(\tau(\boldsymbol{z}^{t}))}{b(\boldsymbol{z}^{t})} = \Delta b,
  	\end{equation}
  	where $\Delta b$ is a nonzero constant. Then from (\ref{Jz diag}), we have 
  	\begin{equation}
  		\boldsymbol{J}(\boldsymbol{z})= \Delta b \boldsymbol{I}_n,
  	\end{equation}
  	and hence
  	\begin{equation}
  		\tau(\boldsymbol{z})=\Delta b \cdot \boldsymbol{z} + \boldsymbol{c}.
  		\label{tau z =c}
  	\end{equation}
  	Substituting equation (\ref{tau z =c}) into (\ref{nonlinear u=0}) gives
  	\begin{equation} 
  		\Delta b \cdot \boldsymbol{A}(\boldsymbol{z}^{t}) \boldsymbol{z}^{t} + \boldsymbol{c}=\boldsymbol{\hat A}\left(\Delta b \cdot \boldsymbol{z}^{t}+\boldsymbol{c} \right) (\Delta b \cdot \boldsymbol{z}^{t} + \boldsymbol{c}).
  		\label{hat A=A}
  	\end{equation}
  	Let $\boldsymbol{z}^{t}=-\dfrac{1}{\Delta b}\cdot \boldsymbol{c}$, we obtain
  	\begin{align} 
  		-\dfrac{1}{\Delta b}\cdot \boldsymbol{c}= \boldsymbol{A}\left( -\dfrac{1}{\Delta b}\cdot \boldsymbol{c} \right)\left(-\dfrac{1}{\Delta b}\cdot \boldsymbol{c}\right).
  		\label{c=hat A c}
  	\end{align} 
  	By assumption, above equation implies $\dfrac{1}{\Delta b}\cdot \boldsymbol{c}=\boldsymbol{0}$, and hence $\boldsymbol{c}=\boldsymbol{0}$. Then we have $\forall \boldsymbol{z} \in \mathbb{R}^n$, $\tau(\boldsymbol{z})=\Delta b \cdot\boldsymbol{z}$ and $\hat b(\Delta b \cdot \boldsymbol{z}) =\Delta b\cdot b(\boldsymbol{z})$.
  	Substituting $\boldsymbol{c}=\boldsymbol{0}$ into (\ref{hat A=A}) gives
  	\begin{equation} 
  		\boldsymbol{A}(\boldsymbol{z}) \boldsymbol{z} =\boldsymbol{\hat A}\left( \Delta b \cdot \boldsymbol{z} \right) \boldsymbol{z}.
  	\end{equation}

  \end{proof}
  
  \begin{remark}
  	  {Most assumptions in Theorem \ref{Theorem affine nonlinear single-input} are similar to Theorem \ref{Theorem linear single-input}, except for the differentiability and $|a_1(\cdot)|\geq h_a>0$. The differentiability assumption is necessary, and it holds for many physical systems like spacecrafts and robot arms, as long as they are not switching systems. The assumption $|a_1(\cdot)|\geq h_a>0$
  		is actually not necessary for our theoretical results.
  	
  	The reason why we assume $|a_1(\cdot)|>0$ is to ensure that all the states in $\mathbb{R}^n$ are reachable, i.e., $\forall \boldsymbol{z}^{t+1} \in \mathbb{R}^n$, $\exists \boldsymbol{z}^{t}$ and $u$ that make the dynamic system equation hold. Then our results will hold for $\mathbb{R}^n$. If the control input can be set to an arbitrarily large value, then we do not need this assumption, which is the same as we discussed in the linear case. In practice, how large the input is required depends on the system and the actual range of the state $z^{t}$. (Notice that $a_1(z^{t})=0$ does not mean we need an infinite input at this state. An input that can compensate for all other terms is enough.)
  	
  	On the other hand, even if $a_1(z^t)=0$ and the input fails to make all the states in $\mathbb{R}^n$ are reachable, it is still not a very big deal. In this case, our results will hold for the reachable set, which is a subset of $\mathbb{R}^n$. Unless we deliberately set the initial state to points in the unreachable set, the system will never arrive at the set that the identifiable results do not hold. Hence, this assumption is not necessary in practice.}
  \end{remark}

  \begin{remark} 
  	
  	Theorem \ref{Theorem affine nonlinear single-input} shows that, even if the system is affine nonlinear, we can identify the latents up to scaling, and the dynamic model is identified up to a trivial transformation, which will not affect the prediction. This is because the one-step prediction from the model
 \begin{align} 
 	\boldsymbol{\tilde z}^{t+1}\doteq& \boldsymbol{\hat A}(\boldsymbol{\hat z}^t)\boldsymbol{\hat z}^t +\hat b(\boldsymbol{\hat z}^t)u^t \label{tilde z^{t+1}} \\
 	=&\Delta b\cdot \boldsymbol{A}(\boldsymbol{z}^t)\boldsymbol{z}^t+\Delta b\cdot b(\boldsymbol{z}^t)u\notag \\
 	=&\Delta b \cdot \boldsymbol{z}^{t+1} \notag \\
 	=&\boldsymbol{\hat z}^{t+1}\notag . 	
 \end{align}
 
 { If we want to identify the dynamic model one step further, e.g., if we  require that all the coefficient functions are identifiable, additional assumptions have to be made. One typical assumption is to let the coefficient function $a_i(\cdot)$ only depend on $q^{t},q^{t+1},\cdots,q^{t+i-1}$, i.e., the dynamic model (\ref{nonlinear system with single-input plain}) is 
 modified to
  \begin{equation}
 	q^{t+n}-a_{n}(q^{t},q^{t+1},\cdots,q^{t+n-1})q^{t+n-1}-\cdots-a_{1}(q^{t})q^{t} = b(\boldsymbol{z}^{t}) u^{t}.
 \end{equation}
 And the estimated coefficient functions are modified accordingly. Then all the coefficient functions are identifiable, which can be easily derived by using the conclusion of Theorem \ref{Theorem affine nonlinear single-input}.}
\end{remark}

 \subsection{Extensions to Multi-Input Dynamic Systems {with sparse input matrices}}
 In this section, we discuss extending the results in Section \ref{Section Single-Input Latent Dynamic Systems} to multi-input dynamic systems {with sparse input matrices}. If, in the controllable canonical form, the latent variables that each input can affect have no intersections, the conclusions for single-input systems can be easily extended to the multi-input cases. Otherwise, we use the Luenberger controllable canonical form \cite{luenberger1967canonical}, which can model the interactive effect of each subsystem. We only discuss the latter case.
 
 \subsubsection{Linear Cases}
 For linear multi-input systems
 \begin{equation}
 	\boldsymbol{z}^{t+1}=\boldsymbol{\bar A} \boldsymbol{z}^{t} + \boldsymbol{\bar B} \boldsymbol{u}^{t},
 	\label{linear system with multi-input}
 \end{equation}
 where $\boldsymbol{z}^{t} \in  \mathbb{R}^{nd}$, $\boldsymbol{u}^{t} \in \mathbb{R}^d$. The set of the first $n$ equations in (\ref{linear system with multi-input}) is referred to as the first subsystem, and so on. Coefficient matrices $\boldsymbol{\bar A}\in \mathbb{R}^{nd \times nd}$ and $\boldsymbol{\bar B} \in \mathbb{R}^{nd \times d}$ in (\ref{linear system with multi-input}) are given as
 \begin{equation}
 	\boldsymbol{\bar A}=\begin{bmatrix}\boldsymbol{A}^{11}&\cdots&\boldsymbol{A}^{1d}\\ 
 		\vdots&\ddots&\vdots\\ 
 		\boldsymbol{A}^{d1}&\cdots&\boldsymbol{A}^{dd}\end{bmatrix}, \boldsymbol{\bar B}=\begin{bmatrix}\boldsymbol{B}^1\\\vdots\\ \boldsymbol{B}^d\end{bmatrix},
 	\label{linear system with multi-input A B}
 \end{equation}
 where $\forall i,j=1,\cdots,d$, $\boldsymbol{A}^{ii}\in \mathbb{R}^{n \times n}$, $\boldsymbol{A}^{ij}\in \mathbb{R}^{n \times n}$, and $\boldsymbol{B}^{i}\in \mathbb{R}^{n \times d}$ have following forms
 
 \begin{align}
 	\boldsymbol{A}^{ii}=\begin{bmatrix}0&1&0&\cdots&0\\
 		0&0&1&\cdots&0\\
 		\vdots&\vdots&\vdots&\ddots&\vdots\\
 		0&0&0&\cdots&1\\
 		a^{ii}_1& a^{ii}_2& a^{ii}_3&...& a^{ii}_n\end{bmatrix},
 	\boldsymbol{A}^{ij}=\begin{bmatrix}0&0&0&\cdots&0\\
 		0&0&0&\cdots&0\\
 		\vdots&\vdots&\vdots&\ddots&\vdots\\
 		0&0&0&\cdots&0\\
 		a^{ij}_1& a^{ij}_2& a^{ij}_3&...& a^{ij}_n\end{bmatrix}, 
 	\label{linear system with multi-input Aij}
 \end{align}
 \begin{align}
 	\boldsymbol{B}^i=\begin{bmatrix}0&\cdots&\cdots&\cdots&\cdots&\cdots&0\\
 		\vdots&\ &\ &\vdots&\ &\ &\vdots\\
 		0&\cdots&0&b^i&0&\cdots&0\end{bmatrix}. 
 	\label{linear system with multi-input Bi}
 \end{align}
Only the $(n,i)$-th element in $\boldsymbol{B}^i$ is nonzero, which means $\boldsymbol{B}^i$ and the input matix $\boldsymbol{\bar B}$ are sparse. 

    The mixing function $g: \mathbb{R}^{nd} \to \mathcal{X} \subset \mathbb{R}^m$, $m\geq nd$, is similarly defined as in (\ref{g}), except for the dimension of the input.
 
 The estimated coefficient matrices $\boldsymbol{\hat {\bar A}}$ and $\boldsymbol{\hat {\bar B}}$ have the same structures as in (\ref{linear system with multi-input A B}), (\ref{linear system with multi-input Aij}), and (\ref{linear system with multi-input Bi}). $\forall i=1, \cdots, d$, $\hat b^i$ is set to arbitrary nonzero constant in advance, which determines $\boldsymbol{\hat {\bar B}}$. The representation function $f$: $\mathcal{X} \to \mathbb{R}^{nd}$ and the estimated coefficient matrix $\boldsymbol{\hat {\bar A}}$ are learned by solving 
  
 \begin{equation}
 	\mathop{\min}_{\substack{\boldsymbol{\hat {\bar A}}, f(\cdot)}} \mathop{E}  [|| f(\boldsymbol{x}^{t+1})-\boldsymbol{\hat {\bar A}} f(\boldsymbol{x}^{t})-\boldsymbol{\hat {\bar B}} \boldsymbol{u}^t||^2].
 	\label{minimizing2}
 \end{equation} 
 
 Then we have the following identifiability result.
 \begin{theorem}
 	Suppose the latent dynamic system is represented as in (\ref{linear system with multi-input})  and (\ref{g}). The coefficients satisfy $\forall i =1, \cdots, d$, $a^{ii}_1 \neq 0$, $b^i \neq 0$, and $\boldsymbol{I}_{nd}-\boldsymbol{\bar A}$ is nonsingular. We learn the representation and dynamic model by solving (\ref{minimizing2}). Then $\tau(\boldsymbol{z})=\mbox{diag}\{\frac{\hat b^1}{b^1}\boldsymbol{I}_{n}, \cdots, \frac{\hat b^d}{b^d}\boldsymbol{I}_{n}\}\cdot \boldsymbol{z}$. $\forall i,j=1,\cdots,d$,  $\boldsymbol{\hat {A}}^{ii}=\boldsymbol{A}^{ii}$ and $\boldsymbol{\hat {A}}^{ij}=\frac{\hat b^i b^j}{b^i \hat b^j}\boldsymbol{A}^{ij}$.
 	\label{Theorem linear muti-input}
 \end{theorem}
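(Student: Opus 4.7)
The plan is to mirror the two-step proof of Theorem \ref{Theorem linear single-input}, with extra bookkeeping to handle the block structure induced by multiple inputs and the off-diagonal coupling blocks of the Luenberger canonical form. I will first show that the composed map $\tau=f\circ g$ is affine on $\mathbb{R}^{nd}$, then read off its precise block-diagonal linear part and recover $\boldsymbol{\hat {\bar A}}$ from the resulting similarity relation.

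For the affinity step, optimality of (\ref{minimizing2}) yields, exactly as in Theorem \ref{Theorem linear single-input}, the functional identity $\tau(\boldsymbol{\bar A}\boldsymbol{z}^{t}+\boldsymbol{\bar B}\boldsymbol{u}^{t})=\boldsymbol{\hat {\bar A}}\tau(\boldsymbol{z}^{t})+\boldsymbol{\hat {\bar B}}\boldsymbol{u}^{t}$. I would iterate this identity forward for $n$ time steps and substitute $\boldsymbol{z}^{t+n}=\boldsymbol{\bar A}^{n}\boldsymbol{z}^{t}+\sum_{s=0}^{n-1}\boldsymbol{\bar A}^{n-1-s}\boldsymbol{\bar B}\boldsymbol{u}^{t+s}$. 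Taking gradients in each $\boldsymbol{u}^{t+s}$ and subtracting the all-zero-input case shows that the Jacobian $\boldsymbol{J}(\cdot)$ of $\tau$ is invariant under translations by every column of the controllability matrix $[\boldsymbol{\bar B},\boldsymbol{\bar A}\boldsymbol{\bar B},\ldots,\boldsymbol{\bar A}^{n-1}\boldsymbol{\bar B}]$. These columns span $\mathbb{R}^{nd}$ because the Luenberger controllable canonical form is controllable by construction; together with nonsingularity of $\boldsymbol{\bar A}$ (which follows from $a^{ii}_{1}\neq 0$ and the block structure via cofactor expansion), $\boldsymbol{J}$ is constant on all of $\mathbb{R}^{nd}$, so $\tau(\boldsymbol{z})=\boldsymbol{T}\boldsymbol{z}+\boldsymbol{c}$ for some $\boldsymbol{T}\in\mathbb{R}^{nd\times nd}$ and $\boldsymbol{c}\in\mathbb{R}^{nd}$.

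Substituting the affine form back yields the three constraints $\boldsymbol{T}\boldsymbol{\bar A}=\boldsymbol{\hat {\bar A}}\boldsymbol{T}$, $\boldsymbol{T}\boldsymbol{\bar B}=\boldsymbol{\hat {\bar B}}$, and $\boldsymbol{c}=\boldsymbol{\hat {\bar A}}\boldsymbol{c}$. I would partition $\boldsymbol{T}$ into $d\times d$ blocks $\boldsymbol{T}^{ij}\in\mathbb{R}^{n\times n}$. Because column $j$ of $\boldsymbol{\bar B}$ carries a single nonzero entry $b^{j}$ at row $jn$, $\boldsymbol{T}\boldsymbol{\bar B}=\boldsymbol{\hat {\bar B}}$ pins down the last column of every block: the last column of $\boldsymbol{T}^{ii}$ is $[0,\ldots,0,\hat b^{i}/b^{i}]^{T}$ and the last column of $\boldsymbol{T}^{ij}$ for $i\neq j$ is $\boldsymbol{0}$. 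Left-multiplying $\boldsymbol{T}\boldsymbol{\bar B}=\boldsymbol{\hat {\bar B}}$ by $\boldsymbol{\hat {\bar A}}^{k}$ and using $\boldsymbol{\hat {\bar A}}\boldsymbol{T}=\boldsymbol{T}\boldsymbol{\bar A}$ gives $\boldsymbol{T}\boldsymbol{\bar A}^{k}\boldsymbol{\bar B}=\boldsymbol{\hat {\bar A}}^{k}\boldsymbol{\hat {\bar B}}$ for $k=1,\ldots,n-1$, each of which fills in one more column of each block from right to left. The fact that the off-diagonal blocks $\boldsymbol{A}^{ij}$ and $\boldsymbol{\hat A}^{ij}$ have only their last row nonzero keeps the propagation tractable, and after subtracting the contributions of already-determined columns one arrives at $\boldsymbol{T}=\mbox{diag}\{\tfrac{\hat b^{1}}{b^{1}}\boldsymbol{I}_{n},\ldots,\tfrac{\hat b^{d}}{b^{d}}\boldsymbol{I}_{n}\}$. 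Comparing $(i,j)$-blocks in $\boldsymbol{T}\boldsymbol{\bar A}=\boldsymbol{\hat {\bar A}}\boldsymbol{T}$ with this diagonal $\boldsymbol{T}$ then yields $\boldsymbol{\hat A}^{ii}=\boldsymbol{A}^{ii}$ and $\boldsymbol{\hat A}^{ij}=\tfrac{\hat b^{i}b^{j}}{b^{i}\hat b^{j}}\boldsymbol{A}^{ij}$, consistent with the Luenberger sparsity. Finally, $\boldsymbol{T}(\boldsymbol{I}_{nd}-\boldsymbol{\bar A})=(\boldsymbol{I}_{nd}-\boldsymbol{\hat {\bar A}})\boldsymbol{T}$ together with nonsingularity of $\boldsymbol{T}$ and the hypothesis on $\boldsymbol{I}_{nd}-\boldsymbol{\bar A}$ forces $\boldsymbol{I}_{nd}-\boldsymbol{\hat {\bar A}}$ nonsingular, so $\boldsymbol{c}=\boldsymbol{\hat {\bar A}}\boldsymbol{c}$ implies $\boldsymbol{c}=\boldsymbol{0}$.

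The hard part I expect is the recursive, column-by-column determination of $\boldsymbol{T}$. In the single-input case the clean staircase structure of $\boldsymbol{A}^{k}\boldsymbol{B}$ isolated exactly one new entry of $\boldsymbol{T}$ per equation. In the multi-input setting, the off-diagonal coupling blocks $\boldsymbol{\bar A}^{ij}$ cause each $\boldsymbol{\bar A}^{k}\boldsymbol{\bar B}_{j}$ to populate positions outside subsystem $j$, so the argument must carefully verify that the cross-subsystem contributions cancel exactly when the previously determined columns of $\boldsymbol{T}$ are substituted, rather than leaving residual entries that entangle $\boldsymbol{T}^{ij}$ (for $i\neq j$) with the unknown parameters of $\boldsymbol{\hat A}^{ij}$. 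Getting this cancellation to produce the clean block-diagonal conclusion is the delicate part of the argument.
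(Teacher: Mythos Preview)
Your overall plan is correct and tracks the paper's two-step architecture: establish affinity of $\tau$ via the Jacobian/controllability argument, then read off $\boldsymbol{T}$ and $\boldsymbol{\hat{\bar A}}$ from the algebraic constraints $\boldsymbol{T}\boldsymbol{\bar A}=\boldsymbol{\hat{\bar A}}\boldsymbol{T}$, $\boldsymbol{T}\boldsymbol{\bar B}=\boldsymbol{\hat{\bar B}}$, $\boldsymbol{c}=\boldsymbol{\hat{\bar A}}\boldsymbol{c}$. Your handling of the affinity step and of $\boldsymbol{c}=\boldsymbol{0}$ matches the paper (the paper conjugates $\boldsymbol{c}=\boldsymbol{\hat{\bar A}}\boldsymbol{c}$ by $\boldsymbol{T}^{-1}$ to reduce to $(\boldsymbol{I}-\boldsymbol{\bar A})\boldsymbol{T}^{-1}\boldsymbol{c}=\boldsymbol{0}$, which is equivalent to your similarity argument).

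Where you diverge from the paper---and where the worry you flag as ``the hard part'' is slightly misdirected---is in how you extract the block-diagonal form of $\boldsymbol{T}$. You propose to obtain $\boldsymbol{T}=\mathrm{diag}\{\tfrac{\hat b^1}{b^1}\boldsymbol{I}_n,\ldots,\tfrac{\hat b^d}{b^d}\boldsymbol{I}_n\}$ \emph{directly} from the Krylov-type relations $\boldsymbol{T}\boldsymbol{\bar A}^{k}\boldsymbol{\bar B}=\boldsymbol{\hat{\bar A}}^{k}\boldsymbol{\hat{\bar B}}$, hoping the cross-subsystem contributions cancel column by column. But the right-hand sides $\boldsymbol{\hat{\bar A}}^{k}\boldsymbol{\hat{\bar B}}$ involve the unknown parameters $\hat a^{ij}_{\ell}$, so these relations cannot by themselves pin down the subdiagonal entries of the $\boldsymbol{T}^{ij}$ blocks; they only force the pattern of zeros. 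The paper does not attempt this cancellation. Mirroring Step~2 of Theorem~\ref{Theorem linear single-input} exactly, it uses the relations $\boldsymbol{T}\boldsymbol{\bar A}^{k}\boldsymbol{\bar B}=\boldsymbol{\hat{\bar A}}^{k}\boldsymbol{\hat{\bar B}}$ only to conclude that each $\boldsymbol{T}^{ii}$ is lower triangular with $\hat b^{i}/b^{i}$ on the diagonal and each $\boldsymbol{T}^{ij}$ ($i\neq j$) is strictly lower triangular, leaving the subdiagonal entries as unspecified ``$*$''. Then, as a \emph{separate} step, it substitutes this block-triangular $\boldsymbol{T}$ into the intertwining relation $\boldsymbol{T}\boldsymbol{\bar A}=\boldsymbol{\hat{\bar A}}\boldsymbol{T}$ and reads off, column block by column block (exactly as in the single-input proof), that all the ``$*$'' entries vanish---yielding the block-scalar $\boldsymbol{T}$ together with $\boldsymbol{\hat A}^{ii}=\boldsymbol{A}^{ii}$ and $\boldsymbol{\hat A}^{ij}=\tfrac{\hat b^{i}b^{j}}{b^{i}\hat b^{j}}\boldsymbol{A}^{ij}$ in one stroke. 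So the delicate cross-term cancellation you anticipate is simply bypassed: the Krylov relations are asked only to deliver the triangular pattern, and the intertwining relation $\boldsymbol{T}\boldsymbol{\bar A}=\boldsymbol{\hat{\bar A}}\boldsymbol{T}$ does the rest.
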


 \begin{proof}
 	By following step 1 in the proof of Theorem \ref{Theorem linear single-input}, under the assumption that $\forall i =1, \cdots, d$, $a^{ii}_1 \neq 0$, we can easily
 	prove that $\tau(\cdot)$ is an affine map, i.e.,
 	\begin{equation}
 		\boldsymbol{\hat z} = \tau(\boldsymbol{z}) = \boldsymbol{T} \boldsymbol{z} + \boldsymbol{c},
 		\label{linear multi-input a}
 	\end{equation}
 	where $\boldsymbol{T} \in \mathbb{R}^{nd \times nd}, \boldsymbol{c} \in \mathbb{R}^{nd}$, and hence
 	\begin{equation}
 		\left\{
 		\begin{array}{l}
 			\boldsymbol{T}\boldsymbol{\bar A} = \boldsymbol{\hat {\bar A}} \boldsymbol{T} \notag\\
 			\boldsymbol{T}\boldsymbol{\bar A}^{i}\boldsymbol{\bar B} = \boldsymbol{\hat {\bar A}}^{i}\boldsymbol{\hat {\bar B}} , \forall i = 0, \cdots, n-1 \notag\\
 			\boldsymbol{c}=\boldsymbol{\hat {\bar A}} \boldsymbol{c}.
 		\end{array}
 		\right.
 	\end{equation}
 	
 	From $\boldsymbol{T}\boldsymbol{\bar A}^{i}\boldsymbol{\bar B} = \boldsymbol{\hat {\bar A}}^{i}\boldsymbol{\hat {\bar B}}$, after some calculation, we have 
 	\begin{equation}
 		\boldsymbol{T}^{ii}=\begin{bmatrix}\frac{\hat b^i}{b^i}&0&\cdots&0&0\\
 			*&\frac{\hat b^i}{b^i}&0&\cdots&0\\
 			\vdots&\vdots&\vdots&\ddots&\vdots\\
 			*&\cdots&*&\frac{\hat b^i}{b^i}&0\\
 			*&*&\cdots&*&\frac{\hat b^i}{b^i}\end{bmatrix},
 		\boldsymbol{T}^{ij}=\begin{bmatrix}0&0&\cdots&0&0\\
 			*&0&0&\cdots&0\\
 			\vdots&\vdots&\vdots&\ddots&\vdots\\
 			*&\cdots&*&0&0\\
 			*&*&\cdots&*&0\end{bmatrix}. 
 	\end{equation}
 	
 	Then from $\boldsymbol{T}\boldsymbol{\bar A} = \boldsymbol{\hat {\bar A}} \boldsymbol{T}$, we obtain $\boldsymbol{T}=\mbox{diag}\{\frac{\hat b^1}{b^1}\boldsymbol{I}_{n}, \cdots, \frac{\hat b^d}{b^d}\boldsymbol{I}_{n}\}$, $\boldsymbol{\hat {A}}^{ii}=\boldsymbol{A}^{ii}$, and $\boldsymbol{\hat {A}}^{ij}=\frac{\hat b^i b^j}{b^i \hat b^j}\boldsymbol{A}^{ij}$. 
 	
 	We rewrite $\boldsymbol{c}=\boldsymbol{\hat {\bar A}} \boldsymbol{c}$ as $\boldsymbol{T}^{-1}\boldsymbol{c}=\boldsymbol{T}^{-1}\boldsymbol{\hat {\bar A}} \boldsymbol{T} \boldsymbol{T}^{-1}\boldsymbol{c}$, which gives $\boldsymbol{T}^{-1}\boldsymbol{c}=\boldsymbol{\bar A} \boldsymbol{T}^{-1}\boldsymbol{c}$. Since $\boldsymbol{I}-\boldsymbol{\bar A}$ is nonsingular, $\boldsymbol{T}^{-1}\boldsymbol{c}=\boldsymbol{\bar A} \boldsymbol{T}^{-1}\boldsymbol{c}$ yields $\boldsymbol{T}^{-1}\boldsymbol{c}=\boldsymbol{0}$, and hence $\boldsymbol{c}=\boldsymbol{0}$. Then we have $\tau(\boldsymbol{z})=\mbox{diag}\{\frac{\hat b^1}{b^1}\boldsymbol{I}_{n}, \cdots, \frac{\hat b^d}{b^d}\boldsymbol{I}_{n}\}\cdot \boldsymbol{z}$.

 \end{proof}

  Theorem \ref{Theorem linear muti-input} shows that, for multi-input linear systems, we can still identify the latents up to scaling. All the coefficients in the diagonal blocks $\boldsymbol{A}^{ii}$ are identified up to the truth value. Other estimated coefficients are scaled due to the estimation error of $b^i$. However, it is easy to verify that this scaling will not affect one-step predictions from the learned latent model. Since if we let $\boldsymbol{T}=\mbox{diag}\{\frac{\hat b^1}{b^1}\boldsymbol{I}_{n}, \cdots, \frac{\hat b^d}{b^d}\boldsymbol{I}_{n}\}$, then $\boldsymbol{T}\boldsymbol{\bar A}=\boldsymbol{\hat {\bar A}} \boldsymbol{T}$ and $\boldsymbol{T} \boldsymbol{\bar B} = \boldsymbol{\hat {\bar B}}$. Hence we have
 \begin{align} 
 	\boldsymbol{\tilde z}^{t+1}\doteq& \boldsymbol{\hat {\bar A}} \boldsymbol{\hat z}^t +\boldsymbol{\hat {\bar B}}u^t \label{tilde z^{t+1} multi linear} \\
 	=&\boldsymbol{\hat {\bar A}} \boldsymbol{T} \boldsymbol{z}^t +\boldsymbol{T} \boldsymbol{\bar B} u^t\notag \\
 	=&\boldsymbol{T} (\boldsymbol{\bar A} \boldsymbol{z}^t + \boldsymbol{\bar B} u^t)  \notag \\
 	=&\boldsymbol{T}\boldsymbol{z}^{t+1} =\boldsymbol{\hat z}^{t+1}\notag . 	
 \end{align}
  \begin{remark}
 	{
 		There are two core assumptions in the dynamic system (\ref{linear system with multi-input}):
 		\begin{itemize}

 			\item 1. The input matrix $\boldsymbol{\bar B}$ is sparse. This is from the sparse perturbations assumption, which is wildly used in causal representation learning (see, e.g., \cite{ahuja2022weakly,ahuja2023interventional,lachapelle2024nonparametric,lippe2022citris}). If this assumption does not hold, it may be impossible to derive scaling identifiability without further assumptions.  
 			
 			\item 2. The orders are the same for all subsystems. If this assumption does not hold, we need some additional assumptions. A typical assumption is that the system is uncoupled, i.e., $\forall i \neq j$, $\boldsymbol{A}^{ij}=\boldsymbol{0}$. Then all the results still hold.  
 		\end{itemize}
 	}
 \end{remark}
 
 \subsubsection{Affine Nonlinear Cases}
 For affine nonlinear multi-input systems that can represented as
 \begin{equation}
 	\boldsymbol{z}^{t+1}=\boldsymbol{\bar A}(\boldsymbol{z}) \boldsymbol{z}^{t} + \boldsymbol{\bar B}(\boldsymbol{z}) \boldsymbol{u}^{t}.
 	\label{nonlinear system with multi-input}
 \end{equation}
 $\boldsymbol{\bar A}(\boldsymbol{z})\in \mathbb{R}^{nd \times nd}$ and $\boldsymbol{\bar B}(\boldsymbol{z}) \in \mathbb{R}^{nd \times d}$ are given as
 \begin{equation}
 	\boldsymbol{\bar A}(\boldsymbol{z})=\begin{bmatrix}\boldsymbol{A}^{11}(z)&\cdots&\boldsymbol{A}^{1d}(z)\\ 
 		\vdots&\ddots&\vdots\\ 
 		\boldsymbol{A}^{d1}(\boldsymbol{z})&\cdots&\boldsymbol{A}^{dd}(z)\end{bmatrix}, \boldsymbol{\bar B}(\boldsymbol{z})=\begin{bmatrix}\boldsymbol{B}^1(\boldsymbol{z})\\\vdots\\ \boldsymbol{B}^d(\boldsymbol{z})\end{bmatrix},
 	\label{nonlinear system with multi-input A B}
 \end{equation}
 where $\forall i,j=1,\cdots,d$, $\boldsymbol{A}^{ii}(z)\in \mathbb{R}^{n \times n}$, $\boldsymbol{A}^{ij}(z)\in \mathbb{R}^{n \times n}$, and $\boldsymbol{B}^{i}(z)\in \mathbb{R}^{n \times d}$ have following forms
 \begin{align}
 \boldsymbol{A}^{ii}(\boldsymbol{z})=\begin{bmatrix}0&1&0&\cdots&0\\
 		0&0&1&\cdots&0\\
 		\vdots&\vdots&\vdots&\ddots&\vdots\\
 		0&0&0&\cdots&1\\
 		a^{ii}_1(z)& a^{ii}_2(z)& a^{ii}_3(z)&...& a^{ii}_n(z)\end{bmatrix},
 	\boldsymbol{A}^{ij}(\boldsymbol{z})=\begin{bmatrix}0&0&0&\cdots&0\\
 		0&0&0&\cdots&0\\
 		\vdots&\vdots&\vdots&\ddots&\vdots\\
 		0&0&0&\cdots&0\\
 		a^{ij}_1(z)& a^{ij}_2(z)& a^{ij}_3(z)&...& a^{ij}_n(z)\end{bmatrix}, 
 	\label{nonlinear system with multi-input Aij}
 \end{align}
 \begin{equation}
 	\boldsymbol{B}^i(\boldsymbol{z})=\begin{bmatrix}0&\cdots&\cdots&\cdots&\cdots&\cdots&0\\
 		\vdots&\ &\ &\vdots&\ &\ &\vdots\\
 		0&\cdots&0&b^i(z)&0&\cdots&0\end{bmatrix}.
 	\label{nonlinear system with multi-input Bi}
 \end{equation}
 All the coefficient functions in $\boldsymbol{\bar A}(\boldsymbol{z})$ and $\boldsymbol{\bar B}(\boldsymbol{z})$ are assumed to be differentiable and bounded. The estimated coefficient function matrices $\boldsymbol{\hat {\bar A}}(\boldsymbol{z})$ and $\boldsymbol{\hat {\bar B}}(\boldsymbol{z})$ have the same structures as (\ref{nonlinear system with multi-input}), (\ref{nonlinear system with multi-input A B}), (\ref{nonlinear system with multi-input Aij}), and (\ref{nonlinear system with multi-input Bi}), and all the estimated coefficient functions in $\boldsymbol{\hat {\bar A}}(\boldsymbol{z})$ and $\boldsymbol{\hat {\bar B}}(\boldsymbol{z})$ are assumed to be differentiable. 
 
 The representation function $f$ and estimated coefficient function matrices $\boldsymbol{\hat {\bar A}}(\cdot)$ and  $\boldsymbol{\hat {\bar B}}(\cdot)$ are learned by solving 
 \begin{align}
 	\mathop{\min}_{\substack{\boldsymbol{\hat {\bar A}}(\cdot), \boldsymbol{\hat {\bar B}}(\cdot), f(\cdot)}}& \mathop{E} [|| f(\boldsymbol{x}^{t+1})-\boldsymbol{\hat {\bar A}}(f(\boldsymbol{x}^{t})) f(\boldsymbol{x}^{t})-\boldsymbol{\hat {\bar B}}(f(\boldsymbol{x}^{t})) \boldsymbol{u}^t||^2]\notag \\
 	\text{s.t.}\quad \ \ \ &|\hat b^i(\cdot)|\geq \hat h_b>0, \forall i=1, \cdots, d.
 	\label{minimizing4}
 \end{align}

 Then we have the following identifiability result.
 
 \begin{theorem}
 	Suppose the latent dynamic system is represented as in (\ref{nonlinear system with multi-input}) and (\ref{g}). All the coefficient functions are differentiable and bounded. $\forall i=1,\cdots,d$, $|a^{ii}_1(\cdot)|\geq h_a>0$, $|b^i(\cdot)|\geq h_b>0$. The equation $\boldsymbol{c}=\boldsymbol{\bar A}(\boldsymbol{c})\boldsymbol{c}$ has the unique solution $\boldsymbol{c}=\boldsymbol{0}$. We learn the representation and dynamic model by solving (\ref{minimizing4}). Then $\forall i =1, \cdots, d$, $\hat b^i(\Delta b^i \cdot \boldsymbol{z})=\Delta b^i \cdot b^i(\boldsymbol{z})$, where $\Delta b^i \in \mathbb{R}$ is a nonzero constant. Let $\Delta \boldsymbol{\bar{B}} =\mbox{diag}\{\Delta b^1 \boldsymbol{I}_{n}, \cdots, \Delta b^d \boldsymbol{I}_{n}\}$, then $  \boldsymbol{\hat z}=\tau(\boldsymbol{z})=\Delta \boldsymbol{\bar{B}}\cdot \boldsymbol{z}$ and $\boldsymbol{\hat {\bar A}}\left( \boldsymbol{\hat z} \right) \boldsymbol{\hat z} = \Delta \boldsymbol{\bar{B}} \boldsymbol{\bar A}(\boldsymbol{z}) \boldsymbol{z}$.
 	\label{Theorem affine nonlinear multi-input}
 \end{theorem}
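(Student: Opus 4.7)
The plan is to combine the Jacobian-based strategy from the proof of Theorem~\ref{Theorem affine nonlinear single-input} with the block-structure arguments used in the proof of Theorem~\ref{Theorem linear muti-input}. Composing $f$ and $g$ as before, solving (\ref{minimizing4}) yields the functional equation
\begin{equation*}
\tau\!\left(\boldsymbol{\bar A}(\boldsymbol{z}^{t})\boldsymbol{z}^{t}+\boldsymbol{\bar B}(\boldsymbol{z}^{t})\boldsymbol{u}^{t}\right)=\boldsymbol{\hat{\bar A}}\!\left(\tau(\boldsymbol{z}^{t})\right)\tau(\boldsymbol{z}^{t})+\boldsymbol{\hat{\bar B}}\!\left(\tau(\boldsymbol{z}^{t})\right)\boldsymbol{u}^{t},
\end{equation*}
and I let $\boldsymbol{J}$ denote the Jacobian of $\tau$. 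The first step is to reverse-engineer the structure of $\boldsymbol{J}$. Differentiating the above equation with respect to each scalar component $u_i^{t}$ and using the sparsity of $\boldsymbol{\bar B}(\cdot)$ (one nonzero entry per column, at row $in$) immediately pins down column $in$ of $\boldsymbol{J}$ as $[0,\dots,0,\hat b^{i}(\tau(\boldsymbol{z}^{t}))/b^{i}(\boldsymbol{z}^{t}),0,\dots,0]^{T}$ with its nonzero at row $in$, by the same argument that produced (\ref{J_:,n}).

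Next I would iterate to times $t+2,\dots,t+n$ with the intervening inputs set to zero, exactly as in Step~2 of the proof of Theorem~\ref{Theorem affine nonlinear single-input}, invoking (i) the chain form of $\boldsymbol{A}^{ii}(\cdot)$ to propagate the nonzero of $\boldsymbol{\bar B}_{:,i}$ upward within the $i$-th subsystem block and (ii) the fact that only the last row of $\boldsymbol{A}^{ij}(\cdot)$ ($i\neq j$) is nonzero to control the cross-subsystem spillover. After $n$ iterations, this forces each diagonal block $\boldsymbol{J}^{ii}$ to be lower triangular with $\hat b^{i}(\tau(\cdot))/b^{i}(\cdot)$-type entries on the diagonal, and each off-diagonal block $\boldsymbol{J}^{ij}$ ($i\neq j$) to be lower triangular with zero diagonal, generalising (\ref{J_:,:}) to block form as in the linear multi-input proof.

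Then, following Step~3 of Theorem~\ref{Theorem affine nonlinear single-input}, I set $\boldsymbol{u}^{t}=\boldsymbol{0}$, differentiate the functional equation with respect to $\boldsymbol{z}^{t}$, and peel off one column at a time: the canonical shift structure of $\boldsymbol{A}^{ii}(\cdot)$ together with the just-established triangular shape of $\boldsymbol{J}$ forces all sub-diagonal entries, both within each $\boldsymbol{J}^{ii}$ and across every $\boldsymbol{J}^{ij}$, to vanish, yielding $\boldsymbol{J}=\mathrm{diag}\{(\hat b^{1}/b^{1})\boldsymbol{I}_{n},\dots,(\hat b^{d}/b^{d})\boldsymbol{I}_{n}\}$ pointwise. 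Differentiating the full functional equation with respect to $\boldsymbol{z}^{t}$ and reading off the last row of the $i$-th subsystem, as in Step~4 of Theorem~\ref{Theorem affine nonlinear single-input}, gives $\partial_{\boldsymbol{z}^t}[\hat b^{i}(\tau(\boldsymbol{z}^{t}))/b^{i}(\boldsymbol{z}^{t})]=\boldsymbol{0}$, so each ratio equals a nonzero constant $\Delta b^{i}$. Integrating $\boldsymbol{J}=\Delta\boldsymbol{\bar B}$ produces $\tau(\boldsymbol{z})=\Delta\boldsymbol{\bar B}\boldsymbol{z}+\boldsymbol{c}$, and substituting back into the $\boldsymbol{u}^{t}=\boldsymbol{0}$ equation at $\boldsymbol{z}^{t}=-(\Delta\boldsymbol{\bar B})^{-1}\boldsymbol{c}$ together with the hypothesis that $\boldsymbol{c}=\boldsymbol{\bar A}(\boldsymbol{c})\boldsymbol{c}$ has the unique solution $\boldsymbol{c}=\boldsymbol{0}$ forces $\boldsymbol{c}=\boldsymbol{0}$, delivering both $\tau(\boldsymbol{z})=\Delta\boldsymbol{\bar B}\boldsymbol{z}$ and $\boldsymbol{\hat{\bar A}}(\boldsymbol{\hat z})\boldsymbol{\hat z}=\Delta\boldsymbol{\bar B}\,\boldsymbol{\bar A}(\boldsymbol{z})\boldsymbol{z}$.

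The main obstacle is the iterative propagation step in the presence of the coupling blocks $\boldsymbol{A}^{ij}(\cdot)$, $i\neq j$: in the single-input case the column-by-column filling of $\boldsymbol{J}$ is clean, but here a nonzero entry injected into subsystem $i$ at time $t$ spreads to the last row of every subsystem $j$ at time $t+1$ through $\boldsymbol{A}^{ji}(\cdot)$. I must exploit the "only last row nonzero" sparsity of $\boldsymbol{A}^{ij}(\cdot)$ and the assumption $|a^{ii}_{1}(\cdot)|\geq h_{a}>0$ (which preserves full reachability in $\mathbb{R}^{nd}$, as in the single-input case) to show that these cross-block contributions only populate entries that are already permitted by the target block-lower-triangular pattern, and therefore do not obstruct the final reduction to a block-diagonal scalar Jacobian.
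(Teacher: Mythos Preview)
Your proposal is correct and follows essentially the same route as the paper's proof: combine the four-step Jacobian analysis of Theorem~\ref{Theorem affine nonlinear single-input} with the block structure from Theorem~\ref{Theorem linear muti-input}, obtaining first a block-lower-triangular $\boldsymbol{J}$, then a diagonal $\boldsymbol{J}$, then constancy of each ratio $\hat b^{i}(\tau(\cdot))/b^{i}(\cdot)$, and finally $\boldsymbol{c}=\boldsymbol{0}$ via the uniqueness hypothesis. One small notational slip: after Step~3 you write $\boldsymbol{J}=\mathrm{diag}\{(\hat b^{1}/b^{1})\boldsymbol{I}_{n},\dots\}$, but at that stage the diagonal entries within the $i$-th block are $\hat b^{i}(\tau(\boldsymbol{p}^{i}_{k}))/b^{i}(\boldsymbol{p}^{i}_{k})$ for possibly different $\boldsymbol{p}^{i}_{k}$, not yet a common scalar times $\boldsymbol{I}_{n}$; it is precisely your Step~4 that collapses them to a single constant $\Delta b^{i}$, so the argument still goes through.
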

 
\begin{proof}
	For the affine nonlinear multi-input systems, by using similar procedures as in the prove of Theorem \ref{Theorem affine nonlinear single-input}, we still have
	\begin{align}
		\boldsymbol{J}\left(\boldsymbol{\bar A}(\boldsymbol{z}^{t})\boldsymbol{z}^{t} + \boldsymbol{\bar B}(\boldsymbol{z}^{t}) \boldsymbol{u}^{t}\right)\boldsymbol{\bar B}(\boldsymbol{z}^{t})  =\boldsymbol{\hat {\bar B}}\left(\tau(\boldsymbol{z}^{t})\right),
		\label{nonlinear u J multi}
	\end{align}
	\begin{equation}
		\boldsymbol{J}(\boldsymbol{\omega}) \cdot \frac{\partial \boldsymbol{\omega}}{\partial \boldsymbol{z}^t} =\frac{\partial}{\partial \boldsymbol{z}^t}\left(\boldsymbol{\hat {\bar A}}\left(  \tau(\boldsymbol{z}^{t}) \right) \tau(\boldsymbol{z}^{t})\right ), 
		\label{omega d z multi}
	\end{equation}
	where $\boldsymbol{\omega}\doteq \boldsymbol{\bar A}(\boldsymbol{z}^{t}) \boldsymbol{z}^{t}$, and
	\begin{equation}
		\boldsymbol{J}\left(\boldsymbol{y}(\boldsymbol{z}^{t},\boldsymbol{u}^{t})\right)\left[\frac{\partial \boldsymbol{\bar A} (\boldsymbol{z}^{t})\boldsymbol{z}^{t}}{\partial \boldsymbol{z}^t} + \frac{\partial \boldsymbol{\bar B}(\boldsymbol{z}^{t})}{\partial \boldsymbol{z}^t} \boldsymbol{u}^{t}\right] = \frac{\partial \boldsymbol{\hat {\bar A}}\left(  \tau(\boldsymbol{z}^{t}) \right) \tau(\boldsymbol{z}^{t})}{\partial \boldsymbol{z}^t} + \frac{\partial \boldsymbol{\hat {\bar B}}\left(\tau(\boldsymbol{z}^{t})\right)}{\partial \boldsymbol{z}^t} \boldsymbol{u} ^{t}.
		\label{temp multi}
	\end{equation}
	
	By following similar procedures as in the proofs of Theorem \ref{Theorem affine nonlinear single-input} and Theorem \ref{Theorem linear muti-input}, we can easily show that the Jacobian $\boldsymbol{J}(\cdot)$ has the following structure:
	\begin{equation}
		\boldsymbol{J}^{ii}(\boldsymbol{z})=\begin{bmatrix}\frac{\hat b^i(\tau(\boldsymbol{p}_1^i))}{b^i(\boldsymbol{p}_1^i)}&0&\cdots&0&0\\
			*&\frac{\hat b^i(\tau(\boldsymbol{p}_2^i))}{b^i(\boldsymbol{p}_2^i)}&0&\cdots&0\\
			\vdots&\vdots&\vdots&\ddots&\vdots\\
			*&\cdots&*&\frac{\hat b^i(\tau(\boldsymbol{p}_{n-1}^i))}{b^i(\boldsymbol{p}_{n-1}^i)}&0\\
			*&*&\cdots&*&\frac{\hat b^i(\tau(\boldsymbol{p}_n^i))}{b^i(\boldsymbol{p}_n^i)}\end{bmatrix},
		\boldsymbol{J}^{ij}(\boldsymbol{z})=\begin{bmatrix}0&0&\cdots&0&0\\
			*&0&0&\cdots&0\\
			\vdots&\vdots&\vdots&\ddots&\vdots\\
			*&\cdots&*&0&0\\
			*&*&\cdots&*&0\end{bmatrix}.
	\end{equation}
	Then from (\ref{omega d z multi}), after some calculation, we obtain
	\begin{equation}
		\boldsymbol{J}^{ii}(\boldsymbol{z})=\begin{bmatrix}\frac{\hat b^i(\tau(\boldsymbol{p}_1^i))}{b^i(\boldsymbol{p}_1^i)}&0&\cdots&0&0\\
			0&\frac{\hat b^i(\tau(\boldsymbol{p}_2^i))}{b^i(\boldsymbol{p}_2^i)}&0&\cdots&0\\
			\vdots&\vdots&\vdots&\ddots&\vdots\\
			0&\cdots&0&\frac{\hat b^i(\tau(\boldsymbol{p}_{n-1}^i))}{b^i(\boldsymbol{p}_{n-1}^i)}&0\\
			0&0&\cdots&0&\frac{\hat b^i(\tau(\boldsymbol{p}_n^1))}{b^i(\boldsymbol{p}_n^1)}\end{bmatrix},
		\boldsymbol{J}^{ij}(\boldsymbol{z})=\begin{bmatrix}0&\cdots&0\\ 
			\vdots&\ddots&\vdots\\ 
			0&\cdots&0\end{bmatrix}.
	\end{equation}
	Hence 
	\begin{equation}
		J(z)=\mbox{diag} \{\frac{\hat b^1(\tau(\boldsymbol{p}^1_1))}{b^1(\boldsymbol{p}^1_1)}, \cdots, \frac{\hat b^1(\tau(\boldsymbol{p}^1_n))}{b^1(\boldsymbol{p}^1_n)},\cdots,\frac{\hat b^d(\tau(\boldsymbol{p}^d_1))}{b^d(\boldsymbol{p}^d_1)}, \cdots, \frac{\hat b^d(\tau(\boldsymbol{p}^d_n))}{b^d(\boldsymbol{p}^d_n)}\}.
		\label{Jz diag multi}
	\end{equation}
	Then from (\ref{omega d z multi}) and (\ref{temp multi}), by following step 4 in the proof of Theorem \ref{Theorem affine nonlinear single-input}, we have
	\begin{equation}
		\boldsymbol{J}(\boldsymbol{z})=\mbox{diag}\{\Delta b^1 \boldsymbol{I}_{n}, \cdots, \Delta b^d \boldsymbol{I}_{n}\}. 
	\end{equation} 
	Finally, under the assumption that the equation $\boldsymbol{c}=\boldsymbol{\bar A}(\boldsymbol{c})\boldsymbol{c}$ has the unique solution $\boldsymbol{c}=\boldsymbol{0}$, by using similar procedures as in the proof of Theorem \ref{Theorem affine nonlinear single-input}, we can easily derive all the conclusions in Theorem \ref{Theorem affine nonlinear multi-input}.
	
\end{proof}

\section{Results}
  In this section, we provide the experimental results to verify our theorems. We use synthetic datasets for both linear and affine nonlinear systems.
 \subsection{Experimental Setup}
 \label{Experimental setup}
 \paragraph{Data Generation} For linear systems, we generate datasets by following the data generating process (DGP) described in (\ref{linear system with multi-input}) and (\ref{g}). The coefficients in the dynamic model (\ref{linear system with multi-input}) are randomly chosen, where $\forall i,j = 1,\cdots,d$ and $\forall k=1, \cdots n$, $a_k^{ij}$ is chosen from $U(-2,2)$ and $b^i$ is chosen from $U(0.2,2)$. By following \cite{zimmermann2021contrastive, ahuja2022weakly}, we use a randomly initialized invertible multi-layer perceptron (MLP) with two hidden layers to approximate the mixing function (\ref{g}). For affine nonlinear systems, we generate datasets with DGP (\ref{nonlinear system with multi-input}) and (\ref{g}). We use 3-layer randomly initialized MLPs to approximate the nonlinear mechanisms. The mixing function is the same as in the linear case. For each system, we first consider the noiseless case; then we add white Gaussian noises to each subsystem.

 We use two ways to collect data. \textbf{Passively collected data}: we let each component of the initial state $\boldsymbol{z}^0$ follow an uniform distribution $U(-1,1)$. Each component of $\boldsymbol{u}^0$ has $\frac{1}{2}$ chance to be set to zero and $\frac{1}{2}$ chance to follow an uniform distribution $U(-1,1)$. Then we collect $\boldsymbol{x}^0$, $\boldsymbol{u}^0$, and next-step observation $\boldsymbol{x}^1$. \textbf{Actively collected data}: we fix the initial state $\boldsymbol{z}^0$ to the zero point and let the input $\boldsymbol{u}^t$ at each time step $t$ has the same distribution as $\boldsymbol{u}^0$ in the former case. We simulate the dynamic process up to time step $T_{\max}=5$ and collect $\boldsymbol{x}^t$, $\boldsymbol{u}^t$, and $\boldsymbol{x}^{t+1}$ at each time step $t>0$. 
 
 \paragraph{Model Architectures}
 For linear cases, we use a 3-layer MLP as the representation function $f$ and use linear networks without bias as the linear transformation matrix $\boldsymbol{\bar A}$. $\hat b^i$ is set to 1. For affine nonlinear cases, the representation function $f$ is the same, and we use 3-layer MLPs to model $\boldsymbol{\bar A}(\boldsymbol{z})$ and $\boldsymbol{\bar B}(\boldsymbol{z})$.
 
 \paragraph{Evaluation Metrics}
 We test both the learned representations and the learned dynamic models. To evaluate the learned representations, we use the mean correlation coefficient (MCC)  \cite{hyvarinen2016unsupervised,hyvarinen2017nonlinear,zimmermann2021contrastive} between the learned and ground truth latent variables. Since our theoretical results show that the latents are identified up to scaling, we report MCC based on Pearson (linear) correlation and do not allow permutation when calculating MCC.  
 
 To evaluate the learned dynamic models, for linear systems, we first let $\tilde{a}^{ij}_k=\frac{b^i \hat b^j}{\hat b^i b^j}\hat a^{ij}_k$, $\forall i,j,k$. Then we calculate the parameter estimation error as
 \begin{equation}
 	error=\sqrt{\dfrac{\sum_{i,j,k} (\tilde a^{ij}_k-a^{ij}_k)^2}{\sum_{i,j,k}(a^{ij}_k)^2 }}.
 	\label{error}
 \end{equation}

 For nonlinear systems, we report the MCC between the one-step prediction $\boldsymbol{\tilde z}^{t+1}$  form $\boldsymbol{\hat z}^t$ using the learned dynamic model (see equation (\ref{tilde z^{t+1}})) and the ground truth $\boldsymbol{z}^{t+1}$.
 
 \subsection{Experimental Results} 
 
 \paragraph{Results for Linear Systems}
 
 The experimental results for noiseless linear systems are summarized in Table \ref{table 1}. PAS and ACT stand for passively collected data and actively collected data, respectively, which we detailed in Section \ref{Experimental setup}. Error denotes the parameter estimation error calculated using (\ref{error}). We report the average score from three different randomly generated systems. 
 
 \begin{table}[h]
 	\caption{Results for linear systems without noise.}
 	\label{table 1}
 	\centering
 				\begin{tabular}{ccccc}
 					 \hline
 					$n$&	$d$&DGP  &MCC&Error \\
 					 \hline
 					4 & 1 & PAS &1.000 &0.003 \\ 
 					2 & 2 & PAS &1.000 &0.005 \\
 					3 & 2 & PAS &1.000 &0.017 \\
 					2 & 3 & PAS &0.999 &0.025 \\ \hline
 					4 & 1 & ACT &1.000 &0.031 \\ 
 					2 & 2 & ACT &1.000 &0.032 \\
 					3 & 2 & ACT &0.996 &0.149 \\
 					2 & 3 & ACT &0.999 &0.046 \\
 					 \hline
 				\end{tabular}  
 \end{table}
 
 The results show that the learning method can identify the representations well. The performance for passively collected data is better than the results for actively collected data (with the maximum time step $T_{\max}=5$). One possible explanation is that for some unstable systems, the states diverge rapidly under random inputs, and we need more data to cover these states.  
 
 { To our knowledge, there are almost no previous identifiable representation learning methods fit these deterministic high-order dynamic systems, while these systems are widely used in the system and control theory. The methods using independent components (the first class in Section \ref{sec Identifiable Representation Learning for Dynamic Processes}) are not suitable for our settings, since we do not have independent components they are trying to discover, and they are not aiming to model deterministic difference equations (the first type in Table 1 in \cite{scholkopf2021toward}).
 	
 	 The work most relevant to us is \cite{ahuja2022weakly}, but \cite{ahuja2022weakly} does not explicitly consider dynamic systems. To show the strengths of using high-order dynamic models, we compare our method, which use dynamic model (\ref{linear system with multi-input}), with methods using first-order integrators, which is adopted from the method in \cite{ahuja2022weakly}. We use one specific randomly generated linear system with $n=3$, $d=2$ as the example, and report the blockwise MCC (in the sense of \cite{ahuja2022weakly}) in Table \ref{table 1.1}. 
 
 \begin{table}[h]
 	\caption{ The comparison of using first-order integrators and dynamic model (\ref{linear system with multi-input}).}
 	\label{table 1.1}
 	\centering
 	\begin{tabular}{cccc}
 		\hline
 		Model& blockwise MCC \\
 		\hline
 		first-order integrators & 0.283 \\ 
 		dynamic model (\ref{linear system with multi-input})\textbf{(ours)} & 1.000   \\
 		\hline
 	\end{tabular}  
 \end{table}
   The results show that modelling as first-order integrators can not capture the latents under high-order dynamic mechanisms, which demonstrates the necessity of using the high-order dynamic model in representation learning.}

 We then consider the effect of the noise. We use one specific linear system with $n=3$, $d=2$ as the example and add white Gaussian noises with zero expectations and different standard deviations ($\sigma=0.02,0.20,1.00$) to the system. {The noises are added to the systems in the same channels as the inputs but are not observed.} The initial states and inputs are still range from $-1$ to $1$. The results are shown in Table \ref{table 2}.
 
 \begin{table}[h]
 	\caption{Results for linear systems with noise.}
 	\label{table 2}
 	\centering
 				\begin{tabular}{cccc}
 					 \hline
 					$\sigma$&DGP &MCC&Error\\
 					 \hline
 					0.00 & PAS &1.000 &0.017 \\ 
 					0.02 & PAS &1.000 &0.013 \\ 
 					0.20 & PAS &0.999 &0.034 \\
 					1.00 & PAS &0.917 &0.616 \\ \hline
 					0.00 & ACT &0.993 &0.172 \\
 					0.02 & ACT &0.992 &0.166 \\ 
 					0.20 & ACT &0.991 &0.235 \\ 
 					1.00 & ACT &0.952 &0.457 \\
 					 \hline
 				\end{tabular}  
 \end{table}
 
  Table \ref{table 2} shows that, as the noise increases, the performances of both learned representations and learned models decrease. But relatively small noise has little effect on the results. 
 
 \paragraph{Results for Affine Nonlinear Systems}
 
 We show the results for affine nonlinear systems without and with noise in Table \ref{table 3} and Table \ref{table 4}. MCC(R) and MCC(M) denote the performance of learned representations and learned models, respectively. The results for noiseless cases are the average score from three different randomly generated systems. The results for systems with noise are based on one specific affine nonlinear system with $n=3$ and $d=2$.
 
 \begin{table}[h]
 	\caption{Results for affine nonlinear systems without noise.}
 	\label{table 3}
 	\centering
 				\begin{tabular}{ccccc}
 					 \hline
 					$n$&$d$&DGP&MCC(R)&	MCC(M)\\
 					 \hline
 					4 & 1 & PAS &1.000 &1.000 \\ 
 					2 & 2 & PAS &1.000 &1.000 \\
 					3 & 2 & PAS &0.995 &0.996 \\
 					2 & 3 & PAS &0.996 &0.996 \\ \hline
 					4 & 1 & ACT &1.000 &1.000 \\ 
 					2 & 2 & ACT &1.000 &1.000 \\
 					3 & 2 & ACT &0.995 &0.996 \\
 					2 & 3 & ACT &0.996 &0.996 \\
 					 \hline
 				\end{tabular}   
 \end{table}
 
 \begin{table}[h]
 	\caption{Results for affine nonlinear systems with noise.}
 	\label{table 4}
 	\centering
 				\begin{tabular}{cccc}
 					 \hline
 					$\sigma$&DGP &MCC(R)&MCC(M)\\
 					 \hline
 					0.00 & PAS &0.995 &0.996 \\ 
 					0.02 & PAS &0.996 &0.997 \\
 					0.20 & PAS &0.993 &0.993 \\
 					1.00 & PAS &0.882 &0.727 \\ \hline
 					0.00 & ACT &0.995 &0.997 \\
 					0.02 & ACT &0.987 &0.990 \\ 
 					0.20 & ACT &0.983 &0.959 \\ 
 					1.00 & ACT &0.903 &0.711 \\
 					 \hline
 				\end{tabular}  
 \end{table}
 
 The results show that the method can identify representations and affine nonlinear dynamic models well for noiseless and low-noise systems. When the noise increases, both performances decrease, and the performance for model learning has been more affected.

\section{Discussion}
  In this paper, we study the problem of identifiable representation and model learning for latent dynamic systems. We have shown that, for both linear systems and affine nonlinear systems, controllable canonical forms can be used as an inductive bias to identify the latents up to scaling and determine the dynamic models up to some simple transformations. These results help to understand latent mechanisms behind complex dynamic observations and may lead to more trustworthy decision-making and control methods for intelligent spacecrafts.
  
   However, there are still some limitations. For example, how to learn identifiable representations for general nonlinear systems is still unclear. In this work, our theoretical results only consider deterministic dynamic mechanisms. Hence, a natural extension is considering the case where both stochastic and deterministic mechanisms appear. Our results only consider continuous state space and input space. How to model the deterministic indirect effect of discrete input is still unclear.

\bibliographystyle{IEEEtran}
\bibliography{./sample.bib} 

\end{document}